\theoremstyle{plain}
\newtheorem{theorem}{Theorem}[section]
\theoremstyle{definition}
\theoremstyle{remark}
\DeclareMathOperator*{\argmin}{arg\,min}
\begin{document}

\twocolumn[

\centerline{{\Large \bfseries Cross-regularization: Adaptive Model Complexity through Validation Gradients}}
\vspace*{1em}





\centerline{Carlos Stein Brito}
\vspace*{0.5em}


\centerline{{\itshape NightCity Labs, Lisbon, Portugal}}
\vspace*{0.5em}


\centerline{carlos.stein@nightcitylabs.ai}
\vspace*{1em}


\vskip 0.3in
]




\begin{abstract}
Model regularization requires extensive manual tuning to balance complexity against overfitting. Cross-regularization resolves this tradeoff by directly adapting regularization parameters through validation gradients during training. The method splits parameter optimization - training data guides feature learning while validation data shapes complexity controls - converging provably to cross-validation optima. When implemented through noise injection in neural networks, this approach reveals striking patterns: unexpectedly high noise tolerance and architecture-specific regularization that emerges organically during training. Beyond complexity control, the framework integrates seamlessly with data augmentation, uncertainty calibration and growing datasets while maintaining single-run efficiency through a simple gradient-based approach.
\end{abstract}

\section{Introduction}

Regularization in machine learning requires careful tuning of parameters that control model complexity. Cross-validation provides only discrete feedback through training multiple models. This applies across different types of regularization - from classical norm-based penalties to modern approaches like stochastic regularization and data augmentation. While methods like variational approaches \cite{molchanov2017variational} attempt to learn regularization during training, they optimize proxy objectives on training data rather than directly targeting generalization performance. Previous approaches either required inverse Hessians \cite{larsen1998} or parameter history tracking \cite{maclaurin2015}, or multiple runs \cite{jaderberg2017population}.

We introduce cross-regularization, a direct optimization of complexity controls through gradient descent on validation data. The method splits both parameters and data: model parameters optimize training loss while regularization parameters receive unbiased gradient information from a separate validation set. This provides continuous feedback about generalization during training. We prove convergence to cross-validation solutions under standard optimization assumptions, making the method theoretically grounded.

The framework applies naturally across regularization types. For norm-based regularization, we demonstrate automatic discovery of optimal penalty strength without hyperparameter search. In neural networks, learnable noise scales reveal distinct regularization phases tied to feature learning and architecture-specific patterns. The method extends to data augmentation policy learning, uncertainty calibration, and online adaptation to growing datasets, demonstrating broad applicability while maintaining single-run efficiency.

\section{Background and Related Work}

\subsection{Validation-Gradient Optimization}

Validation-gradients emerged as a principled approach to hyperparameter optimization \cite{bengio2000}, computing gradients of validation performance by backpropagating through the training procedure. While theoretically elegant, these methods require either computing inverse Hessians \cite{larsen1998} or maintaining the full parameter update history \cite{maclaurin2015}, limiting their applicability to modern networks.

Luketina et al. \cite{luketina2016} addressed the scaling challenge by computing validation gradients for hyperparameters from only the most recent parameter update. However, training regularization parameters in both data partitions prevented any convergence guarantees. Our work reformulates the validation gradient approach by directly optimizing the quantities controlled by hyperparameters - weight norms, noise scales and other complexity controls. This eliminates hyperparameters while maintaining the framework of validation-guided optimization. Other works have also utilized validation gradients, for example, to weight source tasks in transfer learning \cite{chen2021weighted}.

\subsection{Approaches to Regularization}

Traditional regularization methods like weight decay \cite{krogh1992simple} and dropout \cite{srivastava2014dropout} use fixed parameters that must be tuned through cross-validation \cite{stone1974cross}. For modern networks, complexity controls often need to vary across components \cite{zhang2021understanding}, and other factors like the behavior of normalization layers under distribution shifts \cite{nado2020evaluating} or the geometry of the loss landscape \cite{foret2020sharpness} also make manual tuning prohibitive. This has led to adaptive methods that attempt to learn regularization parameters during training, like Concrete Dropout \cite{gal2017concrete} and variational dropout \cite{molchanov2017variational}. However, these approaches optimize proxy objectives on training data rather than directly targeting generalization, and can struggle to learn strong regularization due to their training dynamics.

Population Based Training (PBT) \cite{jaderberg2017population} addresses generalization by adapting hyperparameters using validation performance, but requires training multiple parallel models. For noise regularization, Noh et al. \cite{noh2017regularizing} developed gradient-based noise optimization but without validation gradients. 
While methods like variational dropout \cite{molchanov2017variational} offer another adaptive approach by learning noise scales through Bayesian variational inference, they often struggle in practice due to optimization challenges and restrictive prior assumptions. 

\subsection{Statistical Learning Theory}

Our theoretical analysis builds on Vapnik's structural risk minimization principle \cite{vapnik1999nature}, which formalizes the tradeoff between empirical risk and model complexity. Cross-regularization provides direct optimization of this tradeoff through validation gradients. For linear models, optimal regularization relates directly to noise-to-signal ratios \cite{hastie2009elements}, while in neural networks, recent work connects regularization schemes to network capacity \cite{bartlett2017spectrally}. The interaction between regularization and optimization shapes which solutions gradient descent discovers \cite{bishop1995training, saxe2014exact}.

\section{Cross-regularization Method}

Consider the standard machine learning setup where we want to minimize test loss while preventing overfitting through regularization. The traditional cross-validation approach involves training multiple models with different regularization strengths $\lambda$:
\begin{align}
\label{eq:first_loss}
& \min_\lambda \mathcal{L}_\text{val}(w^*(\lambda)) \\
\text{where} \quad & w^*(\lambda) = \argmin_w \{\mathcal{L}_\text{train}(w) + \lambda R(\rho(w))\} \nonumber
\end{align}
for a $\lambda$-scaled penalty that depends on the parameters $w$ through $\rho(w)$, which we denote as regularization parameters and a penalty function $R(\cdot)$. For instance, in L2 regularization, we have $\rho(w) = |w|_2$ and $R(x) = x^2$. With cross-regularization, we demonstrate that the relevant regularization parameters $\rho$ can be directly optimized through gradient descent, eliminating indirect hyparameters $\lambda$.

\subsection{General Framework}

Cross-regularization separates model training into two parallel optimizations:
\begin{itemize}
  \setlength\itemsep{0em}
\item Feature learning (model parameters $\theta$) on training data
\item Complexity control (regularization parameters $\rho$) on regularization data
\end{itemize}
Throughout this work, we use a single train-validation split, though the method naturally extends to k-fold schemes. We denote the complexity training data as the regularization set, instead of the validation set, to clarify that it is used during training.

Concretely, for a model $f_{\theta,\rho}(x)$ with parameters $\theta$ and regularization parameters $\rho$, the training loss for features and validation loss for complexity are:
\begin{align}
\mathcal{L}_\text{train}(\theta,\rho) &= \mathbb{E}_{(x,y)\sim\mathcal{D}_\text{train}}[\ell(f_{\theta,\rho}(x), y)] \\
\mathcal{L}_\text{val}(\theta,\rho) &= \mathbb{E}_{(x,y)\sim\mathcal{D}_\text{val}}[\ell(f_{\theta,\rho}(x), y)]
\end{align}
These are optimized through alternating updates:
\begin{align}
\label{eq:alg_update}
\theta_{t+1} &= \theta_t - \eta_\theta \nabla_\theta \mathcal{L}_\text{train}(\theta_t,\rho_t) \\
\rho_{t+1} &= \rho_t - \eta_\rho \nabla_\rho \mathcal{L}_\text{val}(\theta_{t+1},\rho_t)
\label{eq:alg_update2}
\end{align}
for learning rates $\eta_\theta$ and $\eta_\rho$. This algorithm directly optimizes regularization strength while maintaining the separation between feature learning and complexity control. The validation gradients give $\rho$ continuous feedback about generalization, unlike the discrete feedback in cross-validation.

\subsection{L2 Regularization Analysis}

To demonstrate cross-regularization, we analyze it in the context of ridge regression where the relationship between regularization and solution complexity is well understood. The standard approach controls this complexity indirectly through a regularization parameter $\lambda$:
\begin{align}
    w^*(\lambda) &= \argmin_w \{\|Xw - y\|^2 + \lambda\|w\|_2^2\}
\end{align}
For each $\lambda$, this yields an optimal solution $w^*(\lambda)$ with corresponding norm $\rho^*(\lambda) = \|w^*(\lambda)\|_2$. As $\lambda$ increases, $\rho^*(\lambda)$ decreases monotonically - higher regularization enforces smaller norms. This means searching over $\lambda$ to minimize validation loss:
\begin{align}
    \min_\lambda \mathcal{L}_\text{val}(w^*(\lambda))
\end{align}
is equivalent to directly searching over achievable norms $\rho^*$:
\begin{align}
    \min_{\rho^*} \mathcal{L}_\text{val}(w^*) \quad \text{subject to} \quad \|w^*\|_2 = \rho^*
\end{align}
Rather than searching over $\lambda$, we can directly optimize the L2 constraint by reparameterizing the weights into magnitude and direction:
\begin{align}
    \rho &= \|w\|_2, \\
    \theta &= w/\|w\|_2 \quad \text{where } \|\theta\|_2 = 1
\end{align}
This naturally separates the optimization problem \ref{eq:first_loss}:
\begin{align}
    \mathcal{L}_\text{train}(\theta,\rho) &= \min_{\theta,|\theta\| = 1} \|\rho X\theta - y\|^2_\text{train}  \\
   \mathcal{L}_\text{val}(\theta,\rho) &= \min_\rho \|\rho X\theta - y\|^2_\text{val}
\end{align}
\begin{theorem}[L2 Cross-Validation Equivalence]
Under smoothness and strong convexity conditions, cross-regularization, in eqs. \ref{eq:alg_update} and \ref{eq:alg_update2}, converges to the same solution as optimal ridge regression:
\[\rho^*\theta^* = w_\text{val}(\lambda^*)\]
where $w_\text{val}(\lambda^*)$ is the ridge solution with best validation $\lambda$. The general proof for convex losses appears in appendix.
\end{theorem}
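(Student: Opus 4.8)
The plan is to show that the alternating dynamics of \eqref{eq:alg_update}–\eqref{eq:alg_update2}, under the magnitude--direction reparameterization $w = \rho\theta$ with $\|\theta\|_2=1$, have a unique stable fixed point, and that this fixed point coincides with the ridge solution at the validation-optimal $\lambda^*$. First I would characterize the fixed points. At a fixed point $(\theta^\star,\rho^\star)$ we need $\nabla_\theta \mathcal{L}_\text{train}(\theta^\star,\rho^\star)=0$ on the unit sphere and $\partial_\rho \mathcal{L}_\text{val}(\theta^\star,\rho^\star)=0$. The first condition says $\theta^\star$ minimizes $\|\rho^\star X\theta - y\|_\text{train}^2$ over the sphere, which (for fixed scale $\rho^\star$) is exactly the direction of the least-squares solution restricted to that norm, i.e. $\rho^\star\theta^\star = \argmin_{\|w\|=\rho^\star}\|Xw-y\|_\text{train}^2$. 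Since the norm-constrained ridge problems trace out the same path as the $\lambda$-penalized problems — this is the monotone correspondence $\lambda \leftrightarrow \rho^*(\lambda)$ already noted in the text — there is a $\lambda'$ with $\rho^\star\theta^\star = w^*(\lambda')$. The second condition, $\partial_\rho\|\rho X\theta^\star - y\|_\text{val}^2 = 0$, is a scalar stationarity condition that pins down $\rho^\star$ as the minimizer of validation loss along the ray spanned by $\theta^\star$; combining the two gives that $\rho^\star\theta^\star$ is stationary for validation loss along the ridge path, hence (by strong convexity of the validation objective restricted to that one-dimensional path, which follows from the strong convexity assumption) equals $w_\text{val}(\lambda^*)$.

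Next I would establish convergence of the dynamics to this fixed point. The $\theta$-update is gradient descent on a smooth strongly convex objective (strong convexity of $\mathcal{L}_\text{train}$ in $w$ transfers to the reparameterized problem locally, away from $\rho=0$), so for fixed $\rho$ the inner iterate contracts toward $\theta^*(\rho)$ at a linear rate. The $\rho$-update is a one-dimensional gradient step on $g(\rho) := \mathcal{L}_\text{val}(\theta^*(\rho),\rho)$ — or on the partially-updated surrogate $\mathcal{L}_\text{val}(\theta_{t+1},\rho_t)$. I would treat this as a two-timescale stochastic approximation / perturbed gradient argument: with $\eta_\rho$ sufficiently small relative to $\eta_\theta$, $\theta_t$ tracks $\theta^*(\rho_t)$ up to an $O(\eta_\theta + \eta_\rho/\eta_\theta)$ error, so the $\rho$-iteration behaves like gradient descent on the smooth function $g(\rho)$ plus a vanishing perturbation. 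Strong convexity of $g$ (again from the smoothness/strong-convexity hypotheses, using that the ridge map $\lambda\mapsto w^*(\lambda)$ is smooth and the validation loss is quadratic) then gives linear convergence of $\rho_t$ to $\rho^\star$, and hence $\rho_t\theta_t \to \rho^\star\theta^\star = w_\text{val}(\lambda^*)$.

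For the general convex case (deferred to the appendix), I would replace the explicit ridge-path computations with an envelope/implicit-function argument: define $w^*(\lambda)$ implicitly by the KKT conditions, show $\lambda\mapsto\rho^*(\lambda)=\|w^*(\lambda)\|$ is a continuous strictly decreasing bijection onto the attainable norm interval, so the constrained and penalized families coincide; then the same two-timescale argument shows the alternating scheme converges to the stationary point of $\mathcal{L}_\text{val}$ along this family, which by convexity is the global validation optimum.

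The main obstacle is the interaction between the two timescales together with the non-convexity introduced by the sphere constraint on $\theta$. Concretely, $\theta^*(\rho)$ need not be a benign function of $\rho$ near $\rho = 0$ (the reparameterization is singular there), and the partial update in \eqref{eq:alg_update2} uses $\theta_{t+1}$ rather than the fully optimized $\theta^*(\rho_t)$, so the $\rho$-gradient is only an approximation of $g'(\rho_t)$. Making the perturbation bound quantitative — i.e. showing the tracking error is controlled by the learning-rate ratio and does not accumulate — is the crux; I expect to need a Lyapunov function of the form $a\,\|\theta_t - \theta^*(\rho_t)\|^2 + b\,(\rho_t - \rho^\star)^2$ and a careful choice of $a,b,\eta_\theta,\eta_\rho$ to show it contracts, plus an argument (or assumption) that the iterates stay bounded away from the singular set $\rho = 0$.
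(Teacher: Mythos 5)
Your proposal is correct and follows essentially the same route as the paper: the monotone correspondence between $\lambda$ and the achievable norm $\rho$ to identify the solution spaces of penalized and norm-constrained ridge (the paper's cross-validation equivalence argument), combined with a coupled Lyapunov contraction for the alternating updates under a learning-rate ratio condition (the paper's linear-convergence proof uses exactly the quantity $\|\theta_t-\theta^*(\rho_t)\|^2+\|\rho_t-\rho^*\|^2$ and the condition $\eta_\rho \lesssim \mu\eta_\theta/\beta^2$, matching your two-timescale/weighted-Lyapunov plan). Your added caveats about the singularity at $\rho=0$ and the partially updated $\theta_{t+1}$ in the $\rho$-gradient are refinements the paper glosses over, but they do not change the argument's structure.
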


Since optimizing validation loss over $\lambda$ is equivalent to optimizing over achievable norms $\rho^*$, we can directly optimize $\rho^*$ through gradient descent rather than searching over $\lambda$. This eliminates the indirect hyperparameter while maintaining explicit control over model complexity. The optimization naturally decomposes into two subproblems: an outer loop that optimizes $\rho^*$ using validation loss, and an inner loop that optimizes over the space of parameters orthogonal to the norm constraint. In other words, for each fixed $\rho^*$, the inner optimization learns the optimal direction in parameter space while maintaining the prescribed complexity.

Rather than solving these nested optimization problems exactly, cross-regularization performs alternating gradient updates using different losses for each parameter set. While similar in structure to coordinate descent, this approach differs fundamentally by using separate objective functions: training loss for direction parameters and validation loss for norm parameters. This split optimization allows continuous adaptation of model complexity during training, providing immediate feedback about generalization performance rather than waiting for complete convergence at each complexity level.

\subsection{Parameter Partition through Gradient Decomposition} 

Cross-regularization separates model parameters from regularization parameters. However, for non-smooth penalties like L1 norms, no natural parameter split exists. This limitation suggests shifting from separating parameters to separating parameter update directions - identifying subspaces that control model capacity versus feature learning.

This geometrical view leads to decomposing parameter gradients into orthogonal components:
\begin{equation}
g = g_{\rho} + g_{\perp}, \quad g_{\rho} \perp g_{\perp}
\end{equation}
For a regularizer $R(w)$, the regularization component captures movement in the direction of steepest complexity change, $\nabla_w R(w)$:
\begin{equation}
g_{\rho} = \text{Proj}_{\nabla R}(g) = \frac{\nabla_w R}{|\nabla_w R|_2} \cdot (g^T \nabla_w R)
\end{equation}
For L1 regularization, $\nabla_w R = \text{sign}(w)$ yields:
\begin{equation}
g_{\text{L1}} = \tilde{u}_{\text{L1}} \cdot (g^T \tilde{u}_{\text{L1}}), \quad \tilde{u}_{\text{L1}} = \frac{\text{sign}(w)}{|\text{sign}(w)|_2}
\end{equation}
Training updates occur in the complementary subspace through $g_{\perp}$, maintaining current complexity, while validation updates through $g_{\rho}$ modulate regularization strength. This geometric perspective generalizes to any differentiable regularizer by identifying its characteristic complexity direction.

\subsection{Stochastic Regularization}

Modern neural networks rely heavily on stochastic regularization, where random perturbations during training improve generalization \cite{bishop1995training}. These perturbations can take many forms - from additive noise to random masking - but all require careful tuning of their magnitude, as it determines the strength of regularization: higher values force more robust features at the cost of computational precision. While manual tuning can find a single optimal noise level, cross-validation becomes impractical when different parts of the model require different noise scales.

A simple univariate example illustrates how validation gradients can automate noise tuning. Consider a Gaussian model for regression, $p(y|x) = \mathcal{N}(wx, \sigma^2)$, trained on a single data pair $(x_t,y_t)$. Training both $w$ and $\sigma$ on this data leads to overfitting ($\sigma \rightarrow 0$), but using a validation point $(x_v,y_v)$ to tune $\sigma$ while training $w$ recovers appropriate uncertainty.

This principle scales to neural networks through noise scale parameters optimized by validation gradients. However, noise typically reduces accuracy unless we treat the model as sampling from a distribution - similar to ensemble methods or Bayesian inference. By averaging predictions during validation, the method can discover noise levels that improve generalization. Thus we define the inference at validation using Monte Carlo averaging:
\begin{equation}
f_{\text{val}}(x) = \mathbb{E}_{\epsilon}[f(x,\epsilon)] \approx \frac{1}{K}\sum_{k=1}^K f(x,\epsilon_k), \quad \epsilon_k \sim \mathcal{N}(0,\sigma)
\end{equation}
Leading to distinct objectives for training and validation:
\begin{align}
\mathcal{L}_\text{train} &= \mathbb{E}_{(x,y)\sim\mathcal{D}_\text{train}}\mathbb{E}_{\epsilon}[\ell(f(x,\epsilon), y)] \\
\mathcal{L}_\text{val} &= \mathbb{E}_{(x,y)\sim\mathcal{D}_\text{val}}[\ell(\mathbb{E}_{\epsilon}[f(x,\epsilon)], y)]
\end{align}
Training with single noise samples maintains the regularizing effect of randomization, while validation averages multiple samples to measure generalization. Note that a deterministic validation scheme ($\epsilon=0$) would make the validation loss independent of noise scales $\sigma$, preventing their optimization.

\section{Theoretical Analysis}

Four theoretical results establish the core properties of cross-regularization: the alternating updates converge linearly, the optimization landscape admits local analysis, the statistical complexity scales with regularization parameters, and the method achieves cross-validation performance. All proofs appear in the Appendix.

Our first theoretical result establishes convergence of parameter updates despite the differing training and regularization objectives. Under standard smoothness and strong convexity conditions, the alternating optimization scheme, eqs. \ref{eq:alg_update} and \ref{eq:alg_update2}, achieves linear convergence:
\begin{theorem}[Linear Convergence]
For appropriate learning rates, cross-regularization, eqs. \ref{eq:alg_update} and \ref{eq:alg_update2}, converges linearly:
\[\|\theta_t - \theta^*\|^2 + \|\rho_t - \rho^*\|^2 \leq (1-\kappa)^t(\|\theta_0 - \theta^*\|^2 + \|\rho_0 - \rho^*\|^2)\]
where $\kappa > 0$ depends on problem constants.
\end{theorem}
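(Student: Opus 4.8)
The plan is to read the alternating updates \ref{eq:alg_update}--\ref{eq:alg_update2} as a fixed-point iteration of an operator $T$ on the joint variable $(\theta,\rho)$, whose fixed point $(\theta^*,\rho^*)$ — the point the theorem refers to — is characterized by the coupled stationarity conditions $\nabla_\theta \mathcal{L}_\text{train}(\theta^*,\rho^*)=0$ and $\nabla_\rho \mathcal{L}_\text{val}(\theta^*,\rho^*)=0$, and then to show that $T$ contracts in a (possibly weighted) Euclidean norm. The feature that distinguishes this from ordinary gradient descent is that the two blocks are driven by \emph{different} objectives, so $T$ is not a descent map for any single potential; it is the composition of two block steps, each contracting toward a moving target set by the other block. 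The right bookkeeping device is therefore the pair of best-response maps $\theta^\star(\rho) = \argmin_\theta \mathcal{L}_\text{train}(\theta,\rho)$ and $\rho^\star(\theta) = \argmin_\rho \mathcal{L}_\text{val}(\theta,\rho)$, which are well-defined by strong convexity and satisfy $\theta^* = \theta^\star(\rho^*)$, $\rho^* = \rho^\star(\theta^*)$; differentiating their first-order conditions (implicit function theorem) shows they are Lipschitz with constants $L_{\theta^\star}\le \sup\|\nabla^2_{\theta\rho}\mathcal{L}_\text{train}\|/\mu_\theta$ and $L_{\rho^\star}\le \sup\|\nabla^2_{\rho\theta}\mathcal{L}_\text{val}\|/\mu_\rho$, where $\mu_\theta,\mu_\rho$ are the block strong-convexity moduli.

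The argument then has three steps. First, for the $\theta$-step the standard strongly-convex smooth gradient-descent contraction gives $\|\theta_{t+1}-\theta^\star(\rho_t)\|\le c_\theta\|\theta_t-\theta^\star(\rho_t)\|$ with $c_\theta=\max(|1-\eta_\theta\mu_\theta|,|1-\eta_\theta L_\theta|)<1$ for $\eta_\theta<2/L_\theta$; combining this with the triangle inequality and the Lipschitz bound on $\theta^\star$ yields $\|\theta_{t+1}-\theta^*\|\le c_\theta\|\theta_t-\theta^*\|+(1+c_\theta)L_{\theta^\star}\|\rho_t-\rho^*\|$. Second, the identical computation for the $\rho$-step — which already uses the freshly updated $\theta_{t+1}$ — gives $\|\rho_{t+1}-\rho^*\|\le c_\rho\|\rho_t-\rho^*\|+(1+c_\rho)L_{\rho^\star}\|\theta_{t+1}-\theta^*\|$. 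Substituting the first estimate into the second produces a componentwise linear recursion $v_{t+1}\le M v_t$ for $v_t=(\|\theta_t-\theta^*\|,\|\rho_t-\rho^*\|)^\top$, where $M$ is an explicit $2\times2$ nonnegative matrix in $c_\theta,c_\rho,L_{\theta^\star},L_{\rho^\star}$. Third, $\|\theta_t-\theta^*\|^2+\|\rho_t-\rho^*\|^2\le\|v_t\|^2$, so once the spectral radius of $M$ is strictly below $1$ one gets geometric decay; equivalently one exhibits a weighted Lyapunov function $V_t=\|\theta_t-\theta^*\|^2+\gamma\|\rho_t-\rho^*\|^2$ with $V_{t+1}\le(1-\kappa)V_t$, and since $\gamma$ is a fixed constant this transfers (up to an absorbed multiplicative constant) to the unweighted sum in the theorem.

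The main obstacle is guaranteeing that the spectral radius of $M$ is below $1$: as the inner step sizes are tuned so that $c_\theta,c_\rho\to 0$, the coupling entries of $M$ approach $L_{\theta^\star}L_{\rho^\star}$, so contraction ultimately rests on the weak-coupling condition $L_{\theta^\star}L_{\rho^\star}<1$, i.e.\ the mixed Hessians $\nabla^2_{\theta\rho}\mathcal{L}_\text{train}$ and $\nabla^2_{\rho\theta}\mathcal{L}_\text{val}$ are jointly dominated by $\mu_\theta\mu_\rho$; this is precisely the regime hidden in ``$\kappa>0$ depends on problem constants,'' and the weight $\gamma$ together with $\eta_\theta,\eta_\rho$ must be chosen jointly, the delicate part being to check such a choice is feasible rather than vacuous. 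A cleaner alternative I would also pursue is to bound $\|v_{t+1}\|^2$ directly by splitting every cross term with Young's inequality and a free parameter, absorbing the residual into the diagonal contraction; this sidesteps diagonalizing $M$ at the cost of a somewhat worse $\kappa$, and makes the role of the weak-coupling assumption transparent as the condition under which the free parameter can be chosen to keep the multiplier below $1$.
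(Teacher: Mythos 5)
Your route is genuinely different from the paper's. You measure both block errors against the fixed joint optimum $(\theta^*,\rho^*)$, import the best-response maps $\theta^\star(\rho)$, $\rho^\star(\theta)$ with Lipschitz constants from the implicit function theorem, and close a $2\times 2$ componentwise recursion via a spectral-radius (or weighted-Lyapunov) argument. The paper instead tracks the $\theta$-error against the \emph{moving} target $\theta^*(\rho_t)$: its $\theta$-step gives $\|\theta_{t+1}-\theta^*(\rho_t)\|^2\le(1-\mu\eta_\theta)\|\theta_t-\theta^*(\rho_t)\|^2$, its $\rho$-step gives $\|\rho_{t+1}-\rho^*\|^2\le(1-\alpha\eta_\rho)\|\rho_t-\rho^*\|^2+\beta^2\eta_\rho\|\theta_{t+1}-\theta^*(\rho_t)\|^2$, and the cross term is then absorbed by a two-time-scale step-size condition $\eta_\rho\le\mu\eta_\theta/(4\beta^2)$, yielding $\kappa=\min(\mu\eta_\theta/2,\alpha\eta_\rho)$ with no explicit weak-coupling hypothesis. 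The practical consequence of this difference is the point you yourself flag: in your decomposition the off-diagonal entries of $M$ are governed by $L_{\theta^\star}$ and $L_{\rho^\star}$, which no choice of $\eta_\theta,\eta_\rho$ (or of the weight $\gamma$) can shrink, so your contraction genuinely requires $L_{\theta^\star}L_{\rho^\star}<1$; under the paper's stated assumptions one only gets $L_{\theta^\star}\le\beta/\mu$ and $L_{\rho^\star}\le\beta/\alpha$, and $\beta^2/(\mu\alpha)\ge 1$, so this condition does not follow from them. Hence, as written, your argument proves a weaker statement than the theorem claims at the paper's level of generality.

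That said, your extra hypothesis is not a defect so much as an explicit rendering of something the paper's proof leaves implicit: the paper's Step 2 treats the gradient step taken at $(\theta^*(\rho_t),\rho_t)$ as contracting toward the fixed $\rho^*$, and its Lyapunov bookkeeping silently replaces $\theta^*(\rho_{t+1})$ by $\theta^*(\rho_t)$ between iterations; both moves hide exactly the best-response drift that your constants $L_{\theta^\star},L_{\rho^\star}$ quantify, and a two-block example with strong cross-coupling (each block strongly convex in its own variable, yet $L_{\theta^\star}L_{\rho^\star}>1$) shows some restriction of this kind cannot be dispensed with by step-size tuning alone. So: different decomposition, internally sound modulo the minor point that $c_\theta,c_\rho$ cannot literally be driven to $0$ by step-size choice (only to $(L-\mu)/(L+\mu)$-type values, which does not change your conclusion), but be aware that you are proving the theorem under an added weak-coupling condition, whereas the paper's proof claims it from smoothness plus blockwise strong convexity by exploiting that its coupling term carries a factor of $\eta_\rho$.
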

The linear convergence holds despite the distinct objectives for $\theta$ and $\rho$.

Neural networks and other complex models have nonconvex loss landscapes. To address this, we first characterize convergence through local geometry. 
\begin{theorem}[Local Structure]
\label{thm:local_structure}
Let $L(\theta,\rho)$ be twice continuously differentiable in a neighborhood of a local minimum $(\theta^{*},\rho^{*})$ with Hessian $H$. If $L$ has $L_H$-Lipschitz continuous Hessian, then there exists a radius $r = \min\left(\frac{\mu}{6L_H}, \frac{(1-\gamma)\mu}{2\|H\|}\right)$ (where $\mu = \lambda_{\min}(H)$, $\gamma \in (0,1)$ is a constant determining the preserved fraction of strong convexity, and $\|H\|$ is the spectral norm) such that in the ball $B_r(\theta^{*},\rho^{*})$, the loss admits the decomposition $L(\theta,\rho) = L(\theta^{*},\rho^{*}) + \frac{1}{2} (\mathbf{z} - \mathbf{z}^*)^T H (\mathbf{z} - \mathbf{z}^*) + R(\theta,\rho)$, where $\mathbf{z} = (\theta, \rho)^T$ and the remainder $R$ satisfies $\|R(\theta,\rho)\| \leq \frac{\gamma\mu}{2}(\|\theta-\theta^{*}\|^2 + \|\rho-\rho^{*}\|^2)$.
\end{theorem}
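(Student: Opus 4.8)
The plan is to observe that the claimed decomposition is simply the second-order Taylor expansion of $L$ about the local minimum, so the whole content of the theorem lies in the quantitative bound on the remainder. First I would use the first-order optimality condition $\nabla L(\theta^{*},\rho^{*}) = 0$, which lets me \emph{define} $R(\theta,\rho) := L(\theta,\rho) - L(\theta^{*},\rho^{*}) - \tfrac12(\z - \z^{*})^{T} H (\z - \z^{*})$, so that the stated identity holds by construction; the task then reduces to showing $\abs{R(\z)} \le \tfrac{\gamma\mu}{2}\norm{\z - \z^{*}}^{2}$ on $B_{r}(\z^{*})$, where I use $\norm{\z-\z^{*}}^{2} = \norm{\theta-\theta^{*}}^{2} + \norm{\rho-\rho^{*}}^{2}$.

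Next I would apply Taylor's theorem with the integral form of the remainder along the segment $\z_{t} = \z^{*} + t(\z - \z^{*})$, $t\in[0,1]$. Since $\nabla L(\z^{*}) = 0$ and $\int_{0}^{1}(1-t)\,dt = \tfrac12$, this gives the exact expression $R(\z) = \int_{0}^{1}(1-t)\,(\z - \z^{*})^{T}\big[\nabla^{2} L(\z_{t}) - H\big](\z - \z^{*})\,dt$. Bounding the integrand by the operator-norm inequality $v^{T}Av \le \norm{A}\norm{v}^{2}$ together with the $L_{H}$-Lipschitz continuity of the Hessian, $\norm{\nabla^{2} L(\z_{t}) - H} \le L_{H}\norm{\z_{t} - \z^{*}} = L_{H}\,t\,\norm{\z - \z^{*}}$, and using $\int_{0}^{1} t(1-t)\,dt = \tfrac16$, yields the cubic bound $\abs{R(\z)} \le \tfrac{L_{H}}{6}\norm{\z - \z^{*}}^{3}$.

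Finally I would restrict to the ball: for $\norm{\z-\z^{*}}\le r$ the cubic bound becomes $\abs{R(\z)} \le \tfrac{L_{H}r}{6}\norm{\z-\z^{*}}^{2}$, and the choice $r \le \mu/(6L_{H})$ turns the prefactor into a quantity absorbed into $\tfrac{\gamma\mu}{2}$, which establishes the remainder estimate. The complementary constraint $r \le (1-\gamma)\mu/(2\norm{H})$ is what secures the curvature statement that motivates the result: combining the remainder bound with the quadratic term gives $L(\z) - L(\z^{*}) \ge \tfrac{\mu}{2}\norm{\z-\z^{*}}^{2} - \abs{R(\z)} \ge \tfrac{(1-\gamma)\mu}{2}\norm{\z-\z^{*}}^{2}$, so a fraction $1-\gamma$ of the minimum curvature $\mu$ is preserved throughout $B_{r}$, while $\norm{H}$ governs the matching upper bound $L(\z)-L(\z^{*}) \le \tfrac{\norm{H}+\gamma\mu}{2}\norm{\z-\z^{*}}^{2}$. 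I would also intersect $r$ with the radius of the neighborhood on which $C^{2}$-smoothness and Hessian-Lipschitzness are assumed, since those are only local hypotheses.

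The main obstacle I anticipate is bookkeeping rather than conceptual: pinning down the universal constants so that \emph{both} terms of the stated radius emerge naturally, and checking that a single $r$ simultaneously delivers (i) the value bound $\abs{R}\le\tfrac{\gamma\mu}{2}\norm{\cdot}^{2}$ and (ii) the preserved-curvature behavior responsible for the $\norm{H}$-dependent term. A secondary point is that the theorem bounds only the value of $R$; if downstream results need control of $\nabla R$ or $\nabla^{2} R$ (for instance to inherit strong convexity in the Hessian sense), one should additionally record $\norm{\nabla^{2} L(\z) - H} \le L_{H}r$ on the same ball, which follows from the identical Lipschitz estimate.
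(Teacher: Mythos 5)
Your proposal follows essentially the same route as the paper's proof: Taylor expansion about the minimum using $\nabla L(\z^{*})=0$, the Lipschitz-Hessian remainder bound $\|R\| \le \frac{L_H}{6}\norm{\z-\z^{*}}^{3}$, and restriction to the ball $B_r$ so the cubic bound becomes the stated quadratic one, with the second term of $r$ securing the preserved-curvature property. Your integral-form derivation of the cubic bound and your explicit flag about the constant bookkeeping (whether $\frac{L_H r}{6}$ is truly absorbed into $\frac{\gamma\mu}{2}$ for all $\gamma\in(0,1)$) are in fact more careful than the paper's own argument, which asserts the cubic bound directly and glosses over the same constants.
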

The local quadratic structure implies a well-behaved loss landscape within this radius, providing the smoothness and effective strong convexity crucial for stable optimization. In non-convex settings like neural networks, stability hinges on managing interactions between model parameters ($\theta$) and regularization parameters ($\rho$). Our neural network analysis (Theorem~\ref{thm:nn_convergence_stability}) assumes conditions like Lipschitz continuous validation loss gradients with respect to model parameters. These assumptions ensure bounded coupling, preventing conflicting gradients from destabilizing $\rho$ optimization, a behavior empirically supported by stable noise adaptation over extended training (Appendix~\ref{app:extended_stability_simulations}).

Building on local geometric insights, we further establish convergence guarantees for neural networks under practical assumptions:
\begin{theorem}[Convergence for Neural Networks]
\label{thm:nn_convergence_stability}
Under the assumptions that: (1) the training dynamics of model parameters $\theta$ converge to a stationary point $\theta^*(\rho)$ for any fixed regularization parameters $\rho$; (2) the validation loss $\mathcal{L}_{	ext{val}}(\theta,\rho)$ is $\alpha$-strongly convex in $\rho$ for any fixed $\theta$; and (3) the gradients $\nabla_\rho \mathcal{L}_{	ext{val}}(\theta,\rho)$ are $\beta$-Lipschitz continuous with respect to $\theta$, then the alternating optimization scheme of cross-regularization (Eqs. \ref{eq:alg_update}-\ref{eq:alg_update2}) converges to a stationary point $(\theta^*,\rho^*)$.
\end{theorem}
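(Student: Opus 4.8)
The plan is to read the alternating scheme as a two-timescale recursion: the model parameters $\theta$ form a fast inner loop that, by assumption (1), tracks the map $\rho \mapsto \theta^*(\rho)$, while the regularization parameters $\rho$ form a slow outer loop performing \emph{inexact} gradient descent on the partial-gradient field $\rho \mapsto \nabla_\rho \mathcal{L}_\text{val}(\theta^*(\rho), \rho)$. The target $(\theta^*,\rho^*)$ is characterized by $\theta^* = \theta^*(\rho^*)$ together with $\nabla_\rho \mathcal{L}_\text{val}(\theta^*,\rho^*) = 0$, which (by assumption (2)) is exactly the fixed-point condition $\rho^* = T(\rho^*)$ for the map $T(\rho) := \argmin_{\rho'} \mathcal{L}_\text{val}(\theta^*(\rho), \rho')$. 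So the proof reduces to (i) showing $T$ is a contraction, hence has a unique fixed point $\rho^*$, and (ii) showing the actual iterates converge to it despite the finite-time inner loop.

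The steps, in order. First I would establish that $\rho \mapsto \theta^*(\rho)$ is Lipschitz with some constant $L_\theta$, via the implicit function theorem applied to $\nabla_\theta \mathcal{L}_\text{train}(\theta,\rho)=0$ using invertibility of the $\theta\theta$-Hessian block guaranteed in the neighborhood supplied by Theorem~\ref{thm:local_structure} (or, more cheaply, take it as a mild regularity hypothesis). Second, I would quantify the tracking error: assumption (1) (with $\eta_\rho$ small, or enough inner steps) gives $\|\theta_{t+1}-\theta^*(\rho_t)\| \le \delta_t$ with $\delta_t \to 0$, while $\rho$ moves at most $\eta_\rho\|\nabla_\rho\mathcal{L}_\text{val}\|$ per step so $\theta^*(\rho_{t+1})$ moves at most $L_\theta\eta_\rho\|\nabla_\rho\mathcal{L}_\text{val}\|$. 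Third, I would rewrite the $\rho$-update as $\rho_{t+1} = \rho_t - \eta_\rho\big(\nabla_\rho \mathcal{L}_\text{val}(\theta^*(\rho_t),\rho_t) + \xi_t\big)$ with $\|\xi_t\| \le \beta\delta_t$ by assumption (3). Fourth, using $\alpha$-strong convexity in $\rho$ (assumption (2)) and local smoothness of $\mathcal{L}_\text{val}$ in $\rho$ (bounded $\rho\rho$-Hessian block from Theorem~\ref{thm:local_structure}, fixing an admissible range of $\eta_\rho$), the exact part of the update contracts toward $T(\rho_t)$; combining this with the displacement of $T(\rho_t)$ caused by the shift in $\theta^*(\rho_t)$ — controlled by $L_\theta$ and $\beta$ — under the bounded-coupling condition (a smallness requirement of the form $\beta L_\theta < \alpha$, which is the quantitative content of the ``bounded coupling'' remark preceding the theorem) yields a one-step bound $\|\rho_{t+1}-\rho^*\| \le (1-c\eta_\rho)\|\rho_t-\rho^*\| + \eta_\rho\beta\delta_t$ for some $c>0$. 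Fifth, since $\delta_t \to 0$, a standard summation / input-to-state-stability argument gives $\rho_t \to \rho^*$; continuity of $\theta^*(\cdot)$ then gives $\theta_t \to \theta^*(\rho^*) =: \theta^*$, and passing to the limit in Eqs.~\ref{eq:alg_update}--\ref{eq:alg_update2} shows $\nabla_\theta\mathcal{L}_\text{train}(\theta^*,\rho^*)=0$ and $\nabla_\rho\mathcal{L}_\text{val}(\theta^*,\rho^*)=0$.

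The main obstacle is the coupling between the inner minimizer and the outer variable: because the $\rho$-update uses only the partial gradient $\nabla_\rho\mathcal{L}_\text{val}$ rather than the total derivative through $\theta^*(\rho)$, ``stationarity of the partial gradient'' coincides with a genuine fixed point only when $T$ contracts, which is precisely where assumptions (2) and (3) and the Lipschitz constant of $\theta^*$ must be balanced; making the two-timescale tracking rigorous — controlling $\delta_t$ uniformly and choosing $\eta_\rho$ small relative to $L_\theta$, $\beta$, $\alpha$ and the inner-loop rate — is the delicate quantitative step. A secondary subtlety is that nonconvexity of the training loss makes $\theta^*(\rho)$ only locally defined, so the argument is inherently local: it must be run inside a neighborhood where Theorem~\ref{thm:local_structure} provides the needed Hessian bounds and effective strong convexity, and the conclusion should be stated as convergence to a \emph{nearby} stationary point from an initialization in that basin.
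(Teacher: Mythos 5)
Your proposal is sound and reaches the same conclusion, but it packages the argument differently from the paper. The paper's proof defines coupled squared-error metrics $E_\theta(t)=\|\theta_t-\theta^*(\rho_t)\|^2$ and $E_\rho(t)=\|\rho_t-\rho^*\|^2$, derives a two-dimensional linear recursion $\bigl(E_\theta,E_\rho\bigr)_{t+1}\le M\bigl(E_\theta,E_\rho\bigr)_t$ with off-diagonal coupling entries $\delta\eta_\rho^2$ and $\eta_\rho^2\beta^2$, and concludes by requiring the eigenvalues of $M$ to lie inside the unit circle for sufficiently small $\eta_\rho$. You instead run a two-timescale / inexact-gradient argument: Lipschitzness of $\rho\mapsto\theta^*(\rho)$, a tracking error $\delta_t$, the rewriting $\rho_{t+1}=\rho_t-\eta_\rho(\nabla_\rho\mathcal{L}_\text{val}(\theta^*(\rho_t),\rho_t)+\xi_t)$ with $\|\xi_t\|\le\beta\delta_t$, contraction of the composite map $T(\rho)=\argmin_{\rho'}\mathcal{L}_\text{val}(\theta^*(\rho),\rho')$ under a coupling condition of the form $\beta L_\theta<\alpha$, and an input-to-state-stability summation. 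The comparison is instructive: the paper's version is more compact and treats the coupling symmetrically through the matrix $M$, but it is loose exactly at the point you single out as the main obstacle --- its $\rho$-step contraction $(1-\eta_\rho\alpha)$ is asserted toward a fixed $\rho^*$, whereas strong convexity of $\mathcal{L}_\text{val}(\theta^*(\rho_t),\cdot)$ only gives contraction toward the moving minimizer $T(\rho_t)$ (and its split of the squared distance into two squared terms written as an equality drops a cross term); a smallness condition on the coupling, not just on $\eta_\rho$, is what really closes that gap, since the drift of $T(\rho_t)$ is of the same order $O(\eta_\rho)$ as the contraction gain. Your route makes explicit what the paper leaves implicit: the definition of $\rho^*$ as the fixed point of $T$, the quantitative coupling requirement, the extra regularity of $\theta^*(\rho)$ (which the paper also uses implicitly through its constant $\delta$), and the inherently local nature of the argument via Theorem~\ref{thm:local_structure}. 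The price is that your stated hypotheses are slightly stronger than the theorem's, but this matches the ``bounded coupling'' caveat in the main text, and your sketch is at least as rigorous as the paper's own proof.
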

Thus, Theorems~\ref{thm:local_structure} and \ref{thm:nn_convergence_stability} establish that cross-regularization mirrors the convergence behavior of fixed-regularization models, even within the non-convex landscapes of neural networks.

\subsection{Statistical Rate}

The dimensionality of regularization parameters $\rho$ ($k \ll d$) is typically much smaller than model parameters $\theta$ (dimension $d$). This low-dimensional structure leads to favorable statistical bounds:
\begin{theorem}[Statistical Rate]
The regularization parameters converge to the population optimum at rate:
\[\|\rho_m - \rho^*_\text{true}\|^2 \leq O\left(\frac{k\log(1/\delta)}{m}\right)\]
where $m$ is the regularization set size.
\end{theorem}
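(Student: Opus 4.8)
The plan is to read $\rho_m$ as the minimizer of the empirical validation risk over the $m$ regularization points and run a localized $M$-estimation argument, exploiting that $\rho$ lives in only $k$ dimensions. Write $g(\rho) = \E_{(x,y)}[\ell(\E_\epsilon[f_{\theta^*(\rho),\rho}(x,\epsilon)],y)]$ for the population validation risk, with the model parameters pinned at the trained value $\theta^*(\rho)$ whose existence is guaranteed by assumption~(1) of Theorem~\ref{thm:nn_convergence_stability}, and $\hat g_m(\rho)$ for its empirical average over the regularization set, so $\rho_m = \argmin_\rho \hat g_m(\rho)$ and $\rho^*_\text{true} = \argmin_\rho g(\rho)$. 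Throughout I borrow from Theorems~\ref{thm:local_structure} and~\ref{thm:nn_convergence_stability} the regularity used there: on a neighbourhood of $\rho^*_\text{true}$ the map $g$ is $\alpha$-strongly convex with Lipschitz gradient (the local quadratic structure of Theorem~\ref{thm:local_structure} restricted to the $\rho$-block), and the per-sample gradient $\rho \mapsto \nabla_\rho \ell(\,\cdot\,;\rho)$ is sub-Gaussian with parameter $\sigma$ and Lipschitz in $\rho$, which follows from a bounded, smooth loss.

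The first step is a deterministic reduction. Since $\nabla g(\rho^*_\text{true}) = 0$ and $\nabla \hat g_m(\rho_m) = 0$ by first-order optimality, strong convexity gives
\begin{align*}
\alpha\norm{\rho_m - \rho^*_\text{true}}^2 &\le \langle \nabla g(\rho_m) - \nabla g(\rho^*_\text{true}),\ \rho_m - \rho^*_\text{true}\rangle \\
&= \langle \nabla g(\rho_m) - \nabla \hat g_m(\rho_m),\ \rho_m - \rho^*_\text{true}\rangle \\
&\le \norm{Z_m(\rho_m)}\,\norm{\rho_m-\rho^*_\text{true}},
\end{align*}
where $Z_m(\rho) := \nabla \hat g_m(\rho) - \nabla g(\rho)$ is the empirical-process part of the gradient; hence $\norm{\rho_m - \rho^*_\text{true}} \le \alpha^{-1}\norm{Z_m(\rho_m)}$. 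The second step localizes this: decompose $Z_m(\rho_m) = Z_m(\rho^*_\text{true}) + (Z_m(\rho_m) - Z_m(\rho^*_\text{true}))$. A uniform bound over a ball $B_r(\rho^*_\text{true})$ — built from a covering of the $k$-dimensional ball (metric entropy $k\log(3r/\tau)$), the Lipschitz-in-$\rho$ and sub-Gaussian assumptions, and a union bound — shows that with probability at least $1-\delta/2$ the modulus of continuity of $Z_m$ on $B_r$ is below $\alpha/2$ provided $m \gtrsim k\log(1/\delta)$. Combining, $\norm{Z_m(\rho_m)} \le \norm{Z_m(\rho^*_\text{true})} + \tfrac{\alpha}{2}\norm{\rho_m-\rho^*_\text{true}}$, which rearranges to $\norm{\rho_m - \rho^*_\text{true}} \le 2\alpha^{-1}\norm{Z_m(\rho^*_\text{true})}$.

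It remains to control the single random vector $Z_m(\rho^*_\text{true}) = \tfrac1m\sum_{i=1}^m \xi_i$, where the $\xi_i = \nabla_\rho \ell_i(\rho^*_\text{true}) - \E[\nabla_\rho \ell(\rho^*_\text{true})]$ are i.i.d., mean zero, and sub-Gaussian in $\R^k$ — here using that the regularization points are independent and that, conditionally on the training set, each validation gradient is unbiased for $\nabla g$. A vector Bernstein (or sub-Gaussian norm) concentration inequality yields $\norm{Z_m(\rho^*_\text{true})}^2 \le O(\sigma^2 k\log(1/\delta)/m)$ with probability at least $1-\delta/2$; chaining this through the previous display and a union bound over the two events gives $\norm{\rho_m - \rho^*_\text{true}}^2 \le O(k\log(1/\delta)/m)$, with $\alpha$ and $\sigma$ absorbed into the constant. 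The convergence theorems above ensure the alternating updates actually reach this empirical minimizer, so the bound transfers to the algorithm's output; a parallel route replaces the gradient argument by $\tfrac{\alpha}{2}\norm{\rho_m-\rho^*_\text{true}}^2 \le g(\rho_m) - g(\rho^*_\text{true})$ and bounds the excess risk by a localized uniform deviation, again giving the fast rate.

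The genuine work is the localization step: obtaining the parametric $O(k/m)$ rate rather than the slow $O(\sqrt{k/m})$ rate demands controlling the local modulus of continuity of the empirical gradient process uniformly over a neighbourhood of $\rho^*_\text{true}$, and this in turn requires that $\alpha$, $\sigma$, and the Lipschitz modulus be genuinely uniform there. The subtlety specific to cross-regularization is that $g$ is defined through the trained parameters $\theta^*(\rho)$, which depend both on $\rho$ and on the (independent) training sample, so one must argue these constants hold uniformly along the solution path $\theta^*(\cdot)$ — e.g.\ via the Lipschitz-in-$\theta$ control of validation gradients assumed in Theorem~\ref{thm:nn_convergence_stability} (assumption~(3)) — so that the concentration argument for $Z_m$ is unaffected by the inner--outer coupling.
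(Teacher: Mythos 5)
Your proposal is correct, but it takes a genuinely different route from the paper. The paper's proof is a two-line argument: a uniform convergence bound on the empirical validation risk over the compact $k$-dimensional space $\mathcal{P}$, $\sup_{\rho}|R_m(\theta^*(\rho),\rho)-R(\theta^*(\rho),\rho)|\leq B\sqrt{k\log(1/\delta)/m}$, followed by $\alpha$-strong convexity of the population risk to convert excess risk into parameter error. You instead run a localized $M$-estimation argument on the \emph{gradient} process: first-order optimality plus strong monotonicity of $\nabla g$ reduce the problem to bounding $\norm{Z_m(\rho_m)}$, a covering/Lipschitz argument localizes this to $\norm{Z_m(\rho^*_\text{true})}$, and vector sub-Gaussian concentration gives $\norm{Z_m(\rho^*_\text{true})}^2 = O(k\log(1/\delta)/m)$. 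The comparison is instructive: the paper's route is simpler and needs only bounded loss, compactness, and strong convexity, but as written it only delivers excess risk of order $\sqrt{k\log(1/\delta)/m}$, and hence $\norm{\rho_m-\rho^*_\text{true}}^2 = O\bigl(\sqrt{k\log(1/\delta)/m}\bigr)$ — the stated fast rate $O(k\log(1/\delta)/m)$ does not follow from that chain without an additional localization or Bernstein-type step (the square root silently disappears in the final display). Your gradient-based localization is precisely the machinery that legitimately produces the advertised parametric rate, at the price of stronger regularity (sub-Gaussian, Lipschitz per-sample gradients, local smoothness along $\theta^*(\cdot)$) than the paper assumes; you also correctly isolate the cross-regularization-specific point that, conditional on the independent training sample, the validation gradients at $\theta^*(\rho)$ are unbiased, so the inner--outer coupling does not contaminate the concentration step. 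Two minor caveats: your first-order condition $\nabla\hat g_m(\rho_m)=0$ presumes an interior minimizer (on the paper's compact $\mathcal{P}$ you should use the variational inequality $\langle\nabla\hat g_m(\rho_m),\rho^*_\text{true}-\rho_m\rangle\geq 0$, which leaves the argument unchanged), and your strong convexity of $g$ should be stated as the theorem's own assumption on the population risk rather than borrowed from the neural-network convergence theorem.
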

The $O(\sqrt{k/m})$ convergence rate directly reflects the method's efficiency - statistical error scales with the few regularization parameters ($k$) rather than the full model dimension ($d$).

Cross-regularization also matches the performance of standard cross-validation:
\begin{theorem}[Cross-validation Equivalence]
Under mild conditions on the regularizer, cross-regularization achieves the same validation loss as the optimal cross-validation solution.
\end{theorem}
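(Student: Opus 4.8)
The plan is to show that the fixed points of the cross-regularization iteration, eqs.~\ref{eq:alg_update}--\ref{eq:alg_update2}, coincide with the points selected by cross-validation, and then invoke the convergence results already established (the Linear Convergence theorem for the convex case, Theorem~\ref{thm:nn_convergence_stability} for neural networks, with Theorem~\ref{thm:local_structure} controlling the $\theta$--$\rho$ coupling near the optimum) to conclude that the iterates actually reach such a point. First I would characterize the regularization path: for a penalty $\lambda R(\rho(w))$ with $R$ convex, (sub)differentiable, and strictly monotone in $\rho$, the map $\lambda \mapsto w^*(\lambda) = \argmin_w\{\mathcal{L}_\text{train}(w) + \lambda R(\rho(w))\}$ is continuous, and the induced complexity $\rho^*(\lambda) = \rho(w^*(\lambda))$ is a bijection onto the set of achievable values. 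These are exactly the ``mild conditions'': convexity and coercivity of $\mathcal{L}_\text{train} + \lambda R$ and strict monotonicity of $R$. The first-order condition $\nabla_w \mathcal{L}_\text{train}(w^*(\lambda)) + \lambda\,\partial_w R(w^*(\lambda)) \ni 0$ says that a point lies on this path if and only if the training gradient is aligned with the complexity direction $\nabla_w R$, i.e. the orthogonal component $g_\perp$ of the decomposition $g = g_\rho + g_\perp$ vanishes there.

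Next I would identify the fixed points of cross-regularization. At a fixed point $(\theta^*,\rho^*)$ the $\theta$-update forces $g_\perp(\theta^*,\rho^*)=0$, so by the previous step the reconstructed weight $w^* := \rho^*\theta^*$ (via the magnitude/direction reparameterization, or the analogous reconstruction for a general $R$) equals $w^*(\lambda^*)$ for the unique $\lambda^*$ with $\rho^*(\lambda^*)=\rho^*$. The $\rho$-update forces $\nabla_\rho \mathcal{L}_\text{val}(\theta^*,\rho^*)=0$; composing with the chain rule $\tfrac{d}{d\lambda}\mathcal{L}_\text{val}(w^*(\lambda)) = \langle \nabla_w\mathcal{L}_\text{val}(w^*(\lambda)),\, \tfrac{dw^*}{d\lambda}\rangle$ and noting that $\tfrac{dw^*}{d\lambda}$ is, up to the positive scalar $d\rho^*/d\lambda$, the complexity direction, this is equivalent to $\tfrac{d}{d\lambda}\mathcal{L}_\text{val}(w^*(\lambda))\big|_{\lambda^*}=0$. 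Under unimodality of $\mathcal{L}_\text{val}$ along the path -- which follows from the strong-convexity hypotheses used for the convergence theorems and is the standard bias--variance picture for ridge -- this stationary point is the global minimizer over $\lambda$, so $\lambda^*$ is precisely the cross-validation choice; hence $w^* = w_\text{val}(\lambda^*)$ and the two methods attain the same validation loss, specializing to the L2 Cross-Validation Equivalence result when $R(x)=x^2$. Closing the loop, the cited convergence theorems guarantee cross-regularization actually converges to a stationary point of the alternating dynamics, which the above shows is a cross-validation optimum.

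The step I expect to be the main obstacle is the middle one: rigorously matching stationarity in the low-dimensional coordinate $\rho$ used by the algorithm with stationarity along the one-parameter path $\lambda \mapsto w^*(\lambda)$ used by cross-validation. This requires an implicit-function-theorem argument to compute $dw^*/d\lambda$ on the penalized problem and to show it stays parallel to $\nabla_w R$, and for non-smooth penalties such as L1 it requires handling changes of the active set so that the complexity direction $\mathrm{sign}(w)$ remains well-defined along the path. A secondary subtlety is that cross-regularization performs alternating rather than nested optimization, so one must argue the limit point is a genuine joint stationary point and not an artifact of the update schedule; this is exactly where the local quadratic model of Theorem~\ref{thm:local_structure} and the convergence guarantees of Theorem~\ref{thm:nn_convergence_stability} are brought to bear.
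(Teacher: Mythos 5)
Your route is genuinely different from the paper's, and more ambitious: you try to show that fixed points of the alternating dynamics (Eqs.~\ref{eq:alg_update}--\ref{eq:alg_update2}) are exactly cross-validation optima, then invoke the convergence theorems. The paper's proof of this theorem is much more modest: it assumes monotonicity of $\lambda$ in the achieved complexity $\rho$, so that the map $\lambda \mapsto \rho^{*}(\lambda)$ is a bijection onto the achievable complexity values, and concludes that $\min_\lambda \mathcal{L}_\text{val}(\theta_\lambda)$ and $\min_\rho \mathcal{L}_\text{val}(\theta(\rho))$ are the same minimization of the same objective over the same solution space under two parameterizations. It deliberately does not analyze stationarity of the algorithm at all; that is delegated to the separate convergence results. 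Your first step (path characterization via $g_\perp = 0$ under convexity, coercivity and strict monotonicity) is essentially a fleshed-out version of the paper's monotonicity hypothesis and is fine in spirit.

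The genuine gap is exactly where you predicted it: the claim that $\tfrac{dw^{*}}{d\lambda}$ is, up to a positive scalar, the complexity direction $\nabla_w R$ is false in general. For ridge regression, $w^{*}(\lambda) = (X^{T}X+\lambda I)^{-1}X^{T}y$ gives $\tfrac{dw^{*}}{d\lambda} = -(X^{T}X+\lambda I)^{-1}w^{*}(\lambda)$, which is parallel to $w^{*}(\lambda)$ only when $w^{*}(\lambda)$ is an eigenvector of $X^{T}X$. Consequently, the algorithm's fixed-point condition $\nabla_\rho \mathcal{L}_\text{val}(\theta^{*},\rho^{*})=0$ (a partial derivative at \emph{fixed} direction $\theta^{*}$) is not equivalent to $\tfrac{d}{d\lambda}\mathcal{L}_\text{val}(w^{*}(\lambda))\big|_{\lambda^{*}}=0$: the total path derivative contains the inner-response term $\rho\,\langle \nabla_w \mathcal{L}_\text{val},\, \tfrac{d\theta^{*}}{d\rho}\rangle$, which the envelope theorem kills only for the \emph{training} loss, not for the validation loss, so it does not vanish in general. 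Your chain-rule step silently drops it, and this is precisely the approximation inherent in truncated validation-gradient schemes, not something Theorems~\ref{thm:local_structure} or \ref{thm:nn_convergence_stability} repair (they give convergence to a stationary point of the alternating dynamics, not a certificate that such a point minimizes the reduced objective $\lambda \mapsto \mathcal{L}_\text{val}(w^{*}(\lambda))$). A secondary gap: unimodality of $\mathcal{L}_\text{val}$ along the regularization path does not follow from strong convexity of the losses in $w$; it is an additional assumption. To close the argument you would either need to bound or eliminate the coupling term, or retreat to the paper's weaker reparameterization statement, which only asserts equality of the two optimization problems rather than identifying algorithmic fixed points with their minimizers.
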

Both approaches minimize validation loss over identical solution spaces but through different parameterizations. The theory thus confirms cross-regularization's practical advantages: linear convergence in optimization, statistical scaling with only regularization parameters, and recovery of cross-validation optima.

\section{Norm-Based Regularization Examples} 

\subsection{L2 Regularization}

To validate the theoretical analysis, we construct synthetic data where regularization is essential for generalization. From a small set of independent base features, we generate a larger feature set by adding correlated variants with Gaussian noise. This design creates groups of highly correlated features, making the linear system ill-conditioned - without regularization, the model can exploit these correlations to overfit by assigning large opposing weights to related features.

The empirical results in Figure~\ref{fig:norms}(A-C) show cross-regularization converging to optimally tuned ridge regression through direct gradient descent. The optimization reveals an inherent tension in regularization - training loss increases as the method reduces complexity to improve validation performance, demonstrating how validation gradients guide the tradeoff between memorization and generalization, reducing the generalization gap (difference between train and test accuracy).

\begin{figure*}[h]
\centering
\includegraphics[width=0.55\textwidth]{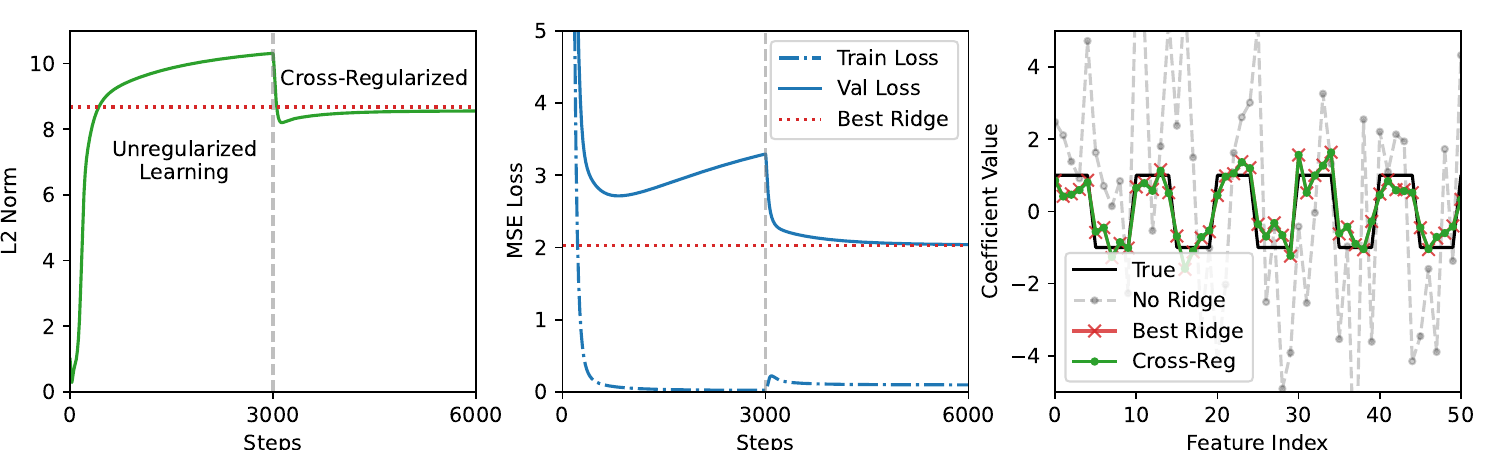}
\includegraphics[width=0.19\textwidth,trim={0.4cm 0.4cm 0.3cm 0.3cm},clip]{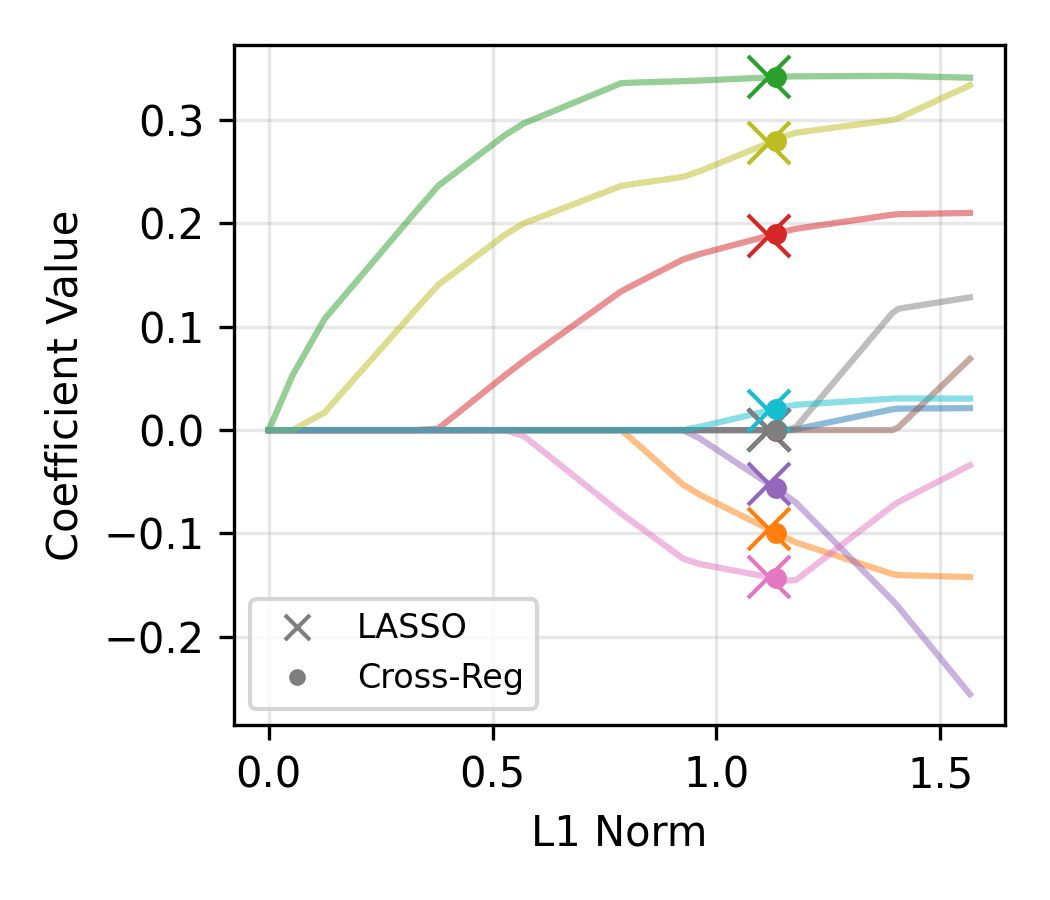}
\includegraphics[width=0.18\textwidth,trim={17.2cm 0.4cm 0.5cm 0.4cm},clip]{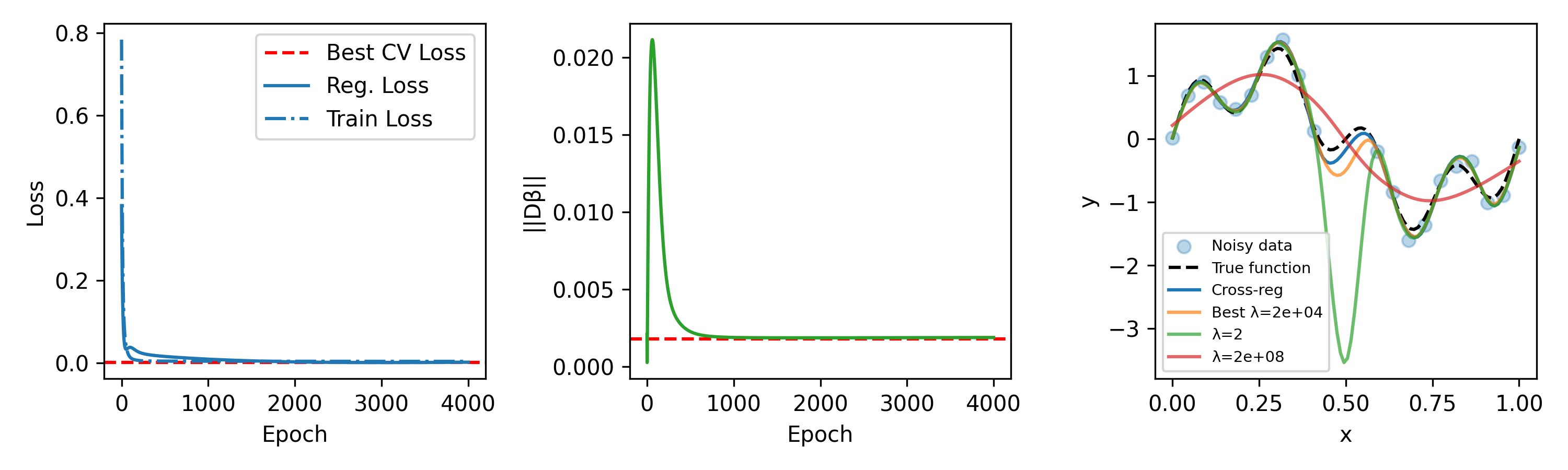}
\caption{Cross-regularization results across different norms. (A-C) L2 regularization: Evolution of weight norm showing adaptation to optimal ridge levels, training/validation loss dynamics, and recovered coefficient values matching cross-validated ridge regression. (D) L1 regularization on diabetes prediction: Validation loss versus L1 norm shows automatic discovery of optimal sparsity matching LASSO. (E) Spline smoothing: Learned function achieves appropriate complexity without manual tuning.}
\label{fig:norms}
\end{figure*}

\todo{change color to blue}

\subsection{L1 Regularization}

L1 regularization presents a fundamental challenge: the non-differentiability of the L1 norm prevents direct norm decomposition. This is particularly important for medical applications where feature selection helps identify relevant biomarkers. On the diabetes progression prediction task, standard LASSO requires evaluating an entire regularization path to find the optimal sparsity level.

Training updates follow $g_{\perp}$ while validation updates control sparsity through $g_{\text{L1}}$. This allows simultaneous optimization of prediction accuracy and feature selection without manual tuning. Figure~\ref{fig:norms}(D) shows cross-regularization automatically discovers optimal sparsity levels matching LASSO's best cross-validated performance.

\subsection{Derivative Norm for Splines}

Smoothing splines measure complexity through integrated squared derivatives, which for B-spline bases reduces to a quadratic form $|D\beta|^2$ with finite difference matrix $D$. The gradient decomposition handles this matrix norm by projecting updates onto level sets of constant smoothness:
\begin{equation}
g_{\rho} = \text{Proj}_{D^T D\beta}(g) = \frac{D^T D\beta}{|D^T D\beta|_2} \cdot (g^T D^T D\beta)
\end{equation}
Training follows $g_{\perp}$ to maintain smoothness while validation gradients through $g_{\rho}$ adapt complexity to local feature scales, as demonstrated in Figure~\ref{fig:norms}(E). This shows how the gradient decomposition framework naturally accommodates differential regularizers.

\section{Neural Network Regularization}

\subsection{Interpretable Noise Regularization}
Cross-regularization allows noise parameters at any granularity, from per-unit to global. We demonstrate the approach using layer-wise noise as it balances adaptivity to network structure against statistical efficiency - while per-unit noise offers maximal flexibility, it introduces $O(d^2L)$ regularization parameters, and global noise cannot capture architectural differences. Layer-wise noise requires only $O(L)$ parameters while respecting natural network boundaries.

Cross-regularization requires only differentiable noise parameters, making it compatible with multiplicative Gaussian noise but not standard Dropout, though both share similar regularization properties \cite{srivastava2014dropout, molchanov2017variational}. While multiplicative noise offers scale invariance, we implement additive noise after normalization to allow for direct analysis of noise structure:
\begin{align}
u_l &= w_l^T h_{l-1} \\
\hat{u}_l &= (u_l - \mu_u)/\sigma_u \\
h_l &= g(\hat{u}_l + \sigma_l\epsilon), \quad \epsilon \sim \mathcal{N}(0,I)
\end{align}
for weights $w_l$, nonlinearity $g(\cdot)$, and Layer normalization, chosen over Batch normalization to avoid confounding regularizing effects of batch stochasticity. This design ensures noise magnitudes remain interpretable across layers, though multiplicative noise achieves similar results with less interpretable scales (see Appendix). During training, we use single noise samples, switching to averaged samples for validation and deterministic prediction $f(x,\epsilon=0)$ at test time. While L2 regularization is a common technique, we focused on noise-based regularization for our neural network experiments given its prominence and effectiveness in regularizing deep neural networks.

\begin{algorithm}
\caption{Cross-regularization Training}
\begin{algorithmic}[1]
\State \textbf{Input}: Parameters $\theta$, $\rho$; datasets $\mathcal{D}_\text{train}$, $\mathcal{D}_\text{reg}$
\For{each epoch}
    \For{$(x,y) \sim \mathcal{D}_\text{train}$}
        \State $\epsilon \sim \mathcal{N}(0,I)$
        \State $\mathcal{L}_\text{train}(f_{\theta,\rho}(x, \epsilon), y)$
        \State Update $\theta$ using gradient descent
        \If{every reg\_interval steps}
            \State $(x_r, y_r) \sim \mathcal{D}_\text{reg}$
            \State $\hat{y} = \frac{1}{K}\sum_{k=1}^K f_{\theta,\rho}(x_r, \epsilon_k)$
            \State $\mathcal{L}_\text{reg}(\hat{y}, y_r)$
            \State Update $\rho$ using gradient descent
        \EndIf
    \EndFor
\EndFor
\end{algorithmic}
\end{algorithm}

\subsection{Adaptive Noise Dynamics}
\label{ssec:adaptive_noise_dynamics}

\begin{figure*}[h]
\centering
\includegraphics[width=0.24\textwidth,trim={0.4cm 0.4cm 0.3cm 0.3cm},clip]{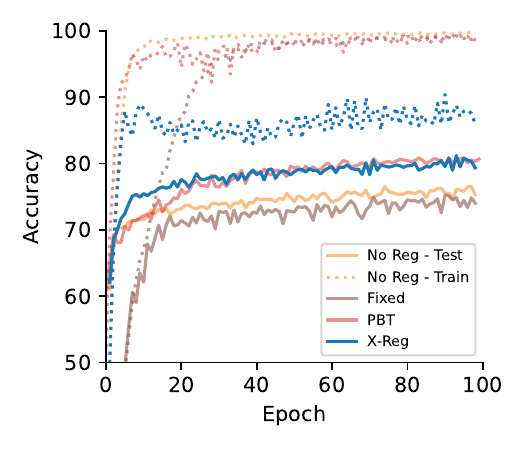}
\includegraphics[width=0.24\textwidth,trim={0.4cm 0.4cm 0.3cm 0.3cm},clip]{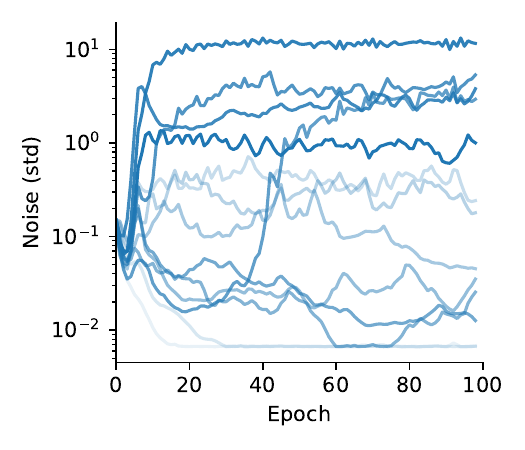}
\includegraphics[width=0.24\textwidth,trim={0.4cm 0.4cm 0.3cm 0.3cm},clip]{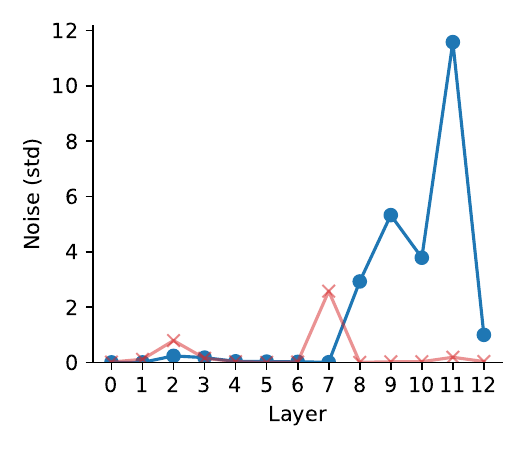}
\includegraphics[width=0.24\textwidth,trim={0.4cm 0.4cm 0.3cm 0.3cm},clip]{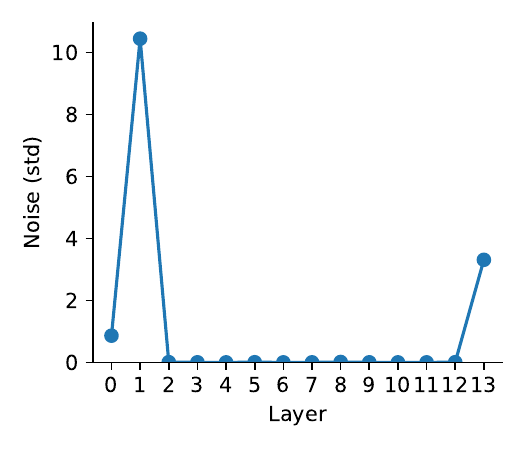}
\caption{Noise dynamics in VGG-16 on CIFAR-10 reveal architectural regularization patterns. (A) Cross-regularization matches PBT accuracy (83.7\%) while improving over baseline (76.0\%) in a single training run. Fixed noise ($\sigma=1$ in the final five layers) impedes initial learning yet fails to prevent later overfitting. (B) Layer noise adaptation tracks overfitting - low during feature learning, then increasing to prevent memorization, when a generalization gap appears (around epoch 5), and again when the training accuracy rises again (around epoch 40), after a period of strong regularization. (C) Final noise distribution (0.01-12 $\sigma$) shows unprecedented yet functional noise levels, surpassing PBT's more conservative regime (max 2.8 $\sigma$). (D) ResNet noise concentrates in early and final layers, with 14.9 $\sigma$ in layer 2. Such a profile enforces a bottleneck at points where there is high information flow.}
\label{fig:vgg}
\end{figure*}

The layer-normalized design provides standardized signal-to-noise ratios, allowing direct analysis of information capacity (Fig. \ref{fig:vgg}). Noise patterns expose computational structure: early layers preserve precise feature detection while deeper layers force increasingly robust representations. When validation accuracy plateaus, indicating potential overfitting, noise selectively increases in layers prone to memorization.

Cross-regularization leads to a regime where neural networks remain functional under surprisingly high noise levels - up to 13 standard deviations post-normalization. While these levels are far beyond conventional regularization strengths, their emergence through gradient-based optimization of generalization performance offers insights into network robustness and capacity.

Our noise level $\sigma$ can be related to standard dropout rates $p$ through signal-to-noise ratio analysis. For layer-normalized activations with unit variance, additive Gaussian noise $\mathcal{N}(0,\sigma^2)$ gives $\text{SNR} = 1/\sigma^2$. Standard dropout with rate $p$, after the $1/(1-p)$ scaling, yields unit variance noise at $p=0.5$ (equivalent to $\sigma=1$). For comparison, typical dropout rates like $p=0.2$ correspond to $\sigma\approx 0.5$. In contrast, our observed noise level of $\sigma=12$ corresponds to dropout rates of $p=0.987$ - revealing a previously unexplored high-noise regime that only becomes accessible through gradual adaptation, as directly initializing with such extreme noise levels would prevent learning.

This high-noise regime, while initially appearing to dramatically challenge assumptions about neural robustness, finds an interesting parallel in the neural network pruning literature, particularly the Lottery Ticket Hypothesis \cite{frankle2018the}. Our observed noise levels of $\sigma \approx 13$ create a significant information bottleneck. Quantitatively, this level of noise corresponds to an information capacity of approximately $0.004$ bits per symbol, which is mathematically equivalent to pruning with a sparsity of around $99.4\%$. This aligns remarkably well with findings from the Lottery Ticket Hypothesis, which has shown that VGG-style networks on CIFAR-10 can be pruned to similar sparsity levels (e.g., $98\%$) while maintaining performance, often with sparsity concentrated in later layers, similar to our noise patterns. This convergence of findings from two distinct methodologies—gradient-based noise adaptation versus iterative weight pruning—suggests that cross-regularization is effectively identifying intrinsic properties of neural information capacity and architecture-specific compressibility.

This adaptive mechanism offers a valuable window into generalization dynamics. Rather than prescribing fixed regularization schedules, noise levels automatically track each layer's capacity to overfit. The gradual emergence of extreme noise demonstrates networks can learn robust features even under severe perturbation when guided by validation gradients. Furthermore, these adaptively learned noise levels and the resulting model performance remain stable over extended training periods, as confirmed by simulations run for 600 epochs (see Figure~\ref{fig:large_noise_evolution} in Appendix~\ref{app:extended_stability_simulations}).

Comparison with Population-Based Training (PBT) validates these findings - both methods optimize noise using validation performance, but through different mechanisms. While evolutionary search also discovers layer-wise patterns but more conservative magnitudes, gradient-based optimization reveals higher functional noise levels, and with an order of magnitude less computation, requiring only $O(T(1 + K/r))$ forward passes versus PBT's $O(PT)$.

Fixed noise injection ($\sigma=1$ in final five layers) illustrates the limitations of static regularization: too strong initially, slowing feature learning, yet insufficient to prevent later overfitting, exhibiting larger generalization gaps. We also attempted a comparison with variational dropout \cite{molchanov2017variational} but found it unsuitable for layer-wise noise adaptation. Despite extensive hyperparameter tuning of the KL weight, the method either collapsed to zero noise or became unstable - precisely the kind of manual tuning our approach aims to eliminate. 

\subsection{Additional Architectures}

To validate our method's generality across architectures, we apply cross-regularization to a ResNet with noise injection (WideResNet-16-4). The noise adaptation reveals a striking pattern: high noise emerges in both early and final layers, with 10.4 $\sigma$ the first layer ($\sigma_1=0.9, \sigma_2 = 10.4, \sigma_{14} = 3.3$), while maintaining low noise in middle layers (Fig. \ref{fig:vgg}-D).

This pattern reflects the network's architecture - since skip connections allow information to bypass middle layers, the network concentrates regularization at early layers that process all inputs and final layers that integrate features, creating an information bottleneck that can't be bypassed. 
These results demonstrate how cross-regularization discovers complexity controls that reflect network topology. The concentration of noise in layers that cannot be circumvented by skip connections provides evidence for how residual architectures shape information flow and regularization requirements.
This finding connects to theoretical work showing residual networks can be viewed as ensembles of paths of different lengths \cite{veit2016residual}.

\subsection{Parameter Sensitivity Analysis}

Empirical studies validate cross-regularization\'s robustness across hyperparameters (Appendix F). While single samples suffice during training, validation requires 3-5 MCMC samples for stable gradient estimation, with performance deteriorating below this range (Fig.~\ref{fig:sweep_mcmc_reg}). Regularization updates can be sparse (every 30 steps) with minimal impact on convergence, resulting in only 10\% computational overhead. The method maintains performance with extremely small regularization sets - down to 1\% of training data (Fig.~\ref{fig:sweep_reg_split}), aligning with our statistical analysis that error scales with regularization dimension $k$ rather than model dimension $d$ (Theorem 4.3). These results establish an efficient configuration for practical use.



\subsection{Automatic Uncertainty Calibration}

Reliable uncertainty estimation is critical for deploying machine learning systems in practice. Medical diagnosis requires accurate confidence scores to determine when to defer to human experts, autonomous systems need calibrated uncertainties for safe decision making, and active learning systems rely on uncertainty estimates to select informative samples. Current approaches either require post-hoc corrections \cite{guo2017calibration}, model-specific assumptions \cite{gal2017concrete}, or separate training objectives \cite{lakshminarayanan2017simple}. However, real-world applications need models that provide reliable uncertainties immediately upon deployment and adapt these estimates as they continue learning.

Analysis of the neural network experiments reveals that cross-regularization achieves strong calibration. Figure~\ref{fig:calibration_cifar} illustrates this by comparing Cross-Regularization (X-Reg) against several baselines: an uncalibrated model, Temperature Scaling~\cite{guo2017calibration}, a model with manually fixed regularization parameters (Fixed Reg), and a 5-member Deep Ensemble. The right panel shows the reliability diagrams, where X-Reg closely tracks perfect calibration. The left panel demonstrates the Expected Calibration Error (ECE) evolution for X-Reg alongside the final ECE values for the baselines. X-Reg achieves a final ECE of 0.038 (with 79.5\% accuracy), showcasing strong calibration in a single training run that is competitive with the 5-member Deep Ensemble (ECE 0.030, Acc. 81.3\%), and significantly surpassing the uncalibrated model (ECE 0.163, Acc. 67.4\%), Temperature Scaling (ECE 0.057, Acc. 69.6\%), and the Fixed Reg model (ECE 0.175, Acc. 74.7\%). Shaded areas in the figure represent 95\% confidence intervals. This online calibration represents a fundamental advance, as the model maintains calibrated uncertainties even as it learns, without requiring a separate post-hoc calibration phase. 

This automatic online calibration represents a fundamental advance in uncertainty estimation. The adaptive noise scales learned through validation simultaneously control model complexity and shape predictive uncertainty, enabling immediate deployment with reliable confidence scores. As the model encounters new data, both its predictions and uncertainty estimates adapt naturally without requiring recalibration or retraining. This direct connection between validation performance and uncertainty quantification suggests cross-regularization learns to modulate predictions based on their empirical reliability, providing a practical solution for systems requiring trustworthy real-time uncertainty estimates.

\begin{figure}[t]
\centering
\includegraphics[width=0.48\textwidth]{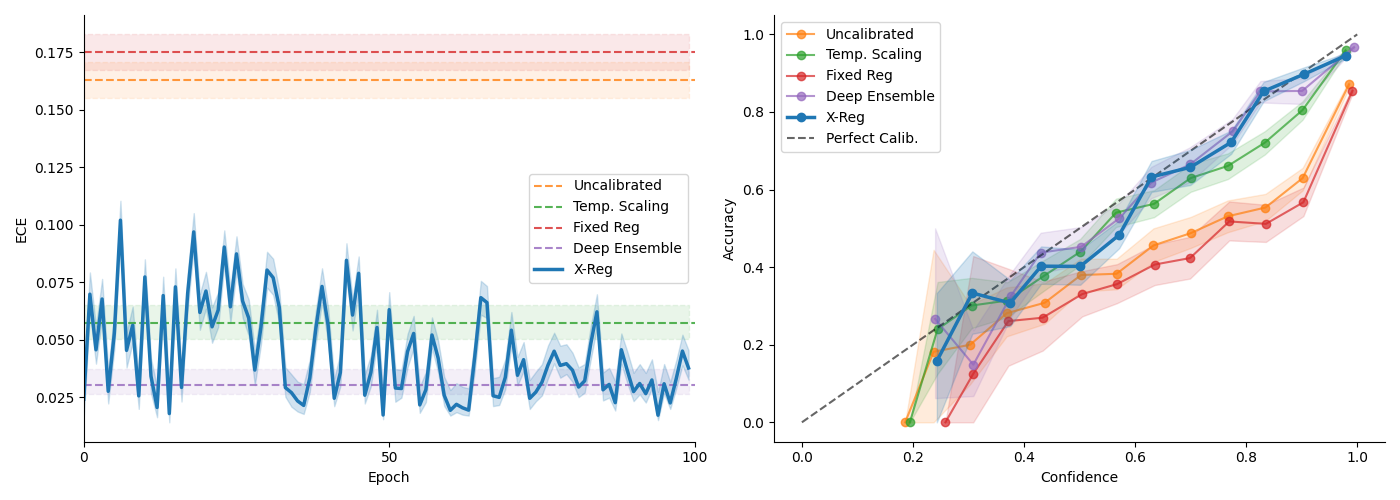}
\caption{Uncertainty calibration and ECE evolution. Left: Expected Calibration Error (ECE) over training epochs for Cross-Regularization (X-Reg), compared to final ECE values for an uncalibrated model, Temperature Scaling, a Fixed Regularization model (Fixed Reg), and a Deep Ensemble. Right: Reliability diagram comparing these methods. Shaded areas represent 95\% confidence intervals. X-Reg maintains strong calibration throughout training and performs competitively against strong baselines.}
\label{fig:calibration_cifar}
\end{figure}

\subsection{Adaptive Regularization Under Data Growth}

Training data often becomes available incrementally, requiring models to learn from limited data while adapting as samples accumulate. While increasing dataset size improves generalization \cite{nakkiran2020deep}, optimal regularization typically requires manual adjustment with data growth. We study adaptation to growing datasets by training initially on 20\% of data before incorporating the full dataset. This mirrors practical scenarios where models must deploy with limited data while preparing for growth.

\begin{figure}[t]
\centering
\includegraphics[width=0.15\textwidth,trim={0.4cm 0.4cm 0.3cm 0.3cm},clip]{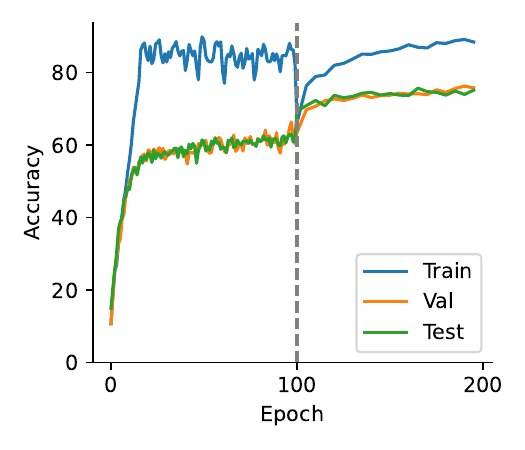}
\includegraphics[width=0.15\textwidth,trim={0.4cm 0.4cm 0.3cm 0.3cm},clip]{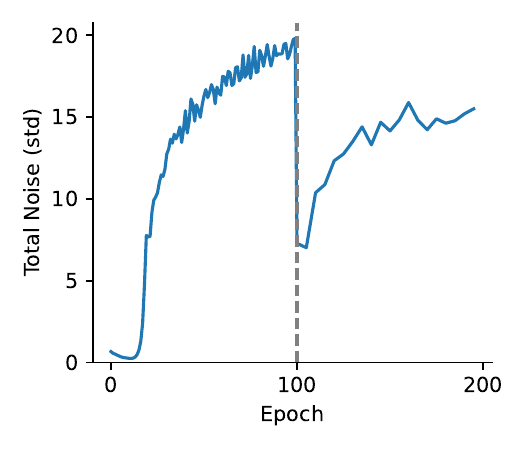}
\includegraphics[width=0.15\textwidth,trim={0.4cm 0.4cm 0.3cm 0.3cm},clip]{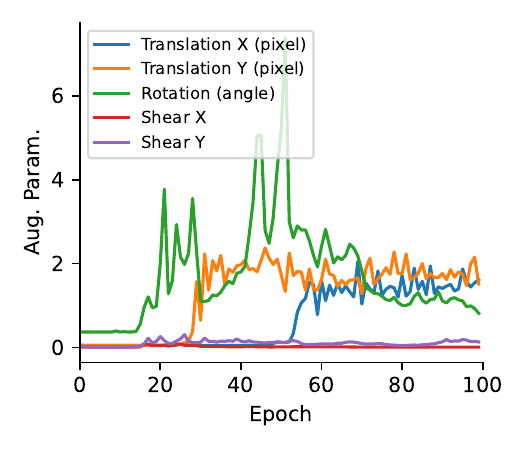}
\caption{Dataset growth adaptation and adaptive augmentation. A: Performance evolution shows successful knowledge transfer at epoch 100 transition from partial to full dataset. B: Total regularization strength automatically adapts - stronger regularization compensates for limited initial data, then decreases as full dataset provides natural regularization. Vertical line marks dataset transition. C: Evolution of learned augmentation parameters on SVHN. Translation parameters (pixels) and rotation angles (degrees) increase early in training before stabilizing, while shear transformations remain minimal. Results demonstrate automatic discovery of dataset-specific invariances favouring rigid transformations.}
\label{fig:curriculum}
\end{figure}


Results demonstrate automatic adaptation to dataset size (Figure~\ref{fig:curriculum}-A,B). During limited-data training, the model maintains generalization through elevated regularization. Upon transitioning to the full dataset, regularization adjusts downward while preserving performance, aligning with theoretical understanding that larger datasets require less explicit regularization. The smooth adaptation suggests applications to continual learning settings where data distributions evolve over time.

\subsection{Adaptive Data Augmentation}

Data augmentation through label-preserving transformations forms a cornerstone for regularization in modern deep learning, yet tuning transformation magnitudes remains a manual, dataset-specific task. Methods like AutoAugment \cite{cubuk2019autoaugment} and Population Based Augmentation \cite{ho2019pba} automate this search through reinforcement learning or evolution, but require thousands of training runs. While cross-regularization optimizes model parameters through validation gradients, applying this approach to data transformations appears problematic - random perturbations of inputs should degrade validation performance, pushing gradient descent to minimize all transformations.

Data augmentation fits naturally into our framework by viewing transformations as a distribution over model configurations, analogous to noise-based regularization. Each transformed input represents a sample from the space of valid variations of the original image. For each type of transformation (e.g., rotation, translation), we define continuous magnitude parameters (e.g., for rotation, a learnable maximum angle $\alpha_{rot}$ defines a range $U[-\alpha_{rot}, \alpha_{rot}]$ from which a specific angle is sampled for each training instance). These magnitude parameters $\alpha$ become part of the model's optimizable regularization parameters $\rho$. As with noise regularization, we maintain the asymmetry between training and validation: single samples during training provide stochastic regularization, while Monte Carlo averaging during validation measures expected performance across transformations. This implementation follows the same alternating optimization algorithm as noise-based regularization: the transformation magnitude parameters $\alpha$ are updated using validation gradients $\nabla_\alpha \mathcal{L}_\text{val}$ (obtained through standard backpropagation as these parameters affect the validation loss), aiming to maximize generalization. 

This validation-guided optimization creates a natural equilibrium for the transformation strengths. If transformations are too aggressive (e.g., rotating a digit '6' by 180 degrees to resemble a '9'), they will degrade validation performance, and the gradients will drive down their magnitudes. Conversely, if transformations are too mild to provide sufficient regularization, the model may overfit to the training data, leading to a higher validation loss compared to a more optimally regularized state, again guiding the parameters $\alpha$ towards more effective levels. The approach naturally encourages transformation-invariant features by penalizing models that fail to generalize across valid variations of the input. This formulation connects preprocessing parameters to stochastic parameters in Bayesian neural networks \cite{blundell2015weight, wan2013regularization}, as both represent distributions over model configurations optimized through Monte Carlo sampling.

Figure~\ref{fig:curriculum}-C demonstrates this adaptation on SVHN: translations converge to 1-2 pixels, rotations to 3 degrees, while shear transformations diminish. These learned parameters match the intuition that digit recognition benefits from rigid transformations over deformations, emerging automatically from validation gradients. The optimization improves test accuracy from 82.8\% to 86.3\%, while reducing the generalization gap from 16.2\% to 7.3\%. The smooth parameter trajectories throughout training reveal when different transformations become more or less useful for regularization. This optimization of preprocessing transforms through validation gradients points to applications in automating feature extraction and data preprocessing, while quantifying the relationship between datasets and their underlying invariances.

\section{Conclusion}

Cross-regularization demonstrates that model complexity control can be directly optimized through validation gradients rather than requiring manual tuning through cross-validation. This optimization enables continuous adaptation of regularization parameters during training, replacing discrete hyperparameter search. The method provides a unified optimization framework for different forms of regularization, from classical norm penalties to stochastic regularization and data augmentation.

Analysis through direct optimization reveals regularization to be a dynamic process that evolves with model training. In deep networks, regularization requirements reflect architectural structure, with patterns of noise tolerance emerging across layers. These findings connect network architecture to optimal regularization strategy, advancing our understanding of how network design influences learning and generalization.

The effectiveness of validation-gradient optimization for complexity control establishes regularization as a learnable component of model training. This formulation eliminates manual tuning while offering a principled approach to regularization design that adapts to specific architectures and tasks.

\section*{Acknoledgements}

This work was developed with the support of NCL agents.

\section*{Impact Statement}

This work introduces cross-regularization, a method to advance Machine Learning by enabling more automated and efficient control of model complexity for improved generalization. By directly optimizing regularization parameters, our approach can lead to more robust and reliable AI systems, reduce manual tuning efforts, and potentially lower computational costs. It also offers insights into model adaptation and can enhance trustworthiness through better uncertainty calibration.

The societal implications of cross-regularization are generally aligned with those of broader progress in AI. We believe this method contributes positively by promoting more principled and efficient model development. While the development of more capable AI always warrants careful consideration, this specific work does not introduce new ethical risks beyond those inherent in advancing AI capabilities.

\bibliography{references}
\bibliographystyle{unsrtnat} 

\newpage
\appendix
\onecolumn

\section{Theoretical Analysis}

We analyze the convergence of our alternating optimization scheme where model parameters $\theta$ train on training data while regularization parameters $\rho$ optimize on a separate regularization set. We show that the split optimization converges despite using different objectives for each parameter set.

\subsection{Optimization Analysis}
We first establish convergence of the alternating gradient scheme under standard smoothness and strong convexity conditions.

\begin{theorem}[Linear Convergence]
Assume the loss $L(\theta,\rho)$ satisfies:
\begin{enumerate}
    \item $\beta$-smoothness in both arguments:
    \[\|\nabla L(\theta,\rho) - \nabla L(\theta',\rho')\| \leq \beta(\|\theta-\theta'\| + \|\rho-\rho'\|)\]
    \item $\mu$-strong convexity in $\theta$ for any fixed $\rho$
    \item $\alpha$-strong convexity in $\rho$ for any fixed $\theta$
\end{enumerate}

Then the alternating updates:
\begin{align}
    \theta_{t+1} &= \theta_t - \eta_\theta \nabla_\theta L_\text{train}(\theta_t,\rho_t) \\
    \rho_{t+1} &= \rho_t - \eta_\rho \nabla_\rho L_\text{reg}(\theta_{t+1},\rho_t)
\end{align}
with learning rates:
\begin{align}
    \eta_\theta &\leq 1/\beta \\
    \eta_\rho &\leq \min(1/\beta, \mu\eta_\theta/(4\beta^2))
\end{align}
converge linearly:
\[\|\theta_t - \theta^*\|^2 + \|\rho_t - \rho^*\|^2 \leq (1-\kappa)^t(\|\theta_0 - \theta^*\|^2 + \|\rho_0 - \rho^*\|^2)\]
where $\kappa = \min(\mu\eta_\theta/2, \alpha\eta_\rho)$.
\end{theorem}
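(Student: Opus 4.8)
The plan is to run a coupled Lyapunov argument on the squared errors $\|\theta_t-\theta^*\|^2$ and $\|\rho_t-\rho^*\|^2$, treating the scheme as a perturbed two-timescale gradient descent: $\theta$ does gradient descent on a strongly convex objective whose minimizer drifts with $\rho_t$, while $\rho$ does gradient descent on a strongly convex objective evaluated at the freshly updated $\theta_{t+1}$. First I would fix notation for the limit point. Since $L_\text{train}(\cdot,\rho)$ is $\mu$-strongly convex it has a unique conditional minimizer $\theta^\star(\rho)$, and $L_\text{reg}(\theta,\cdot)$ likewise has a unique $\rho^\star(\theta)$; the target $(\theta^*,\rho^*)$ is the unique solution of the coupled stationarity system $\theta^*=\theta^\star(\rho^*)$, $\rho^*=\rho^\star(\theta^*)$, equivalently $\nabla_\theta L_\text{train}(\theta^*,\rho^*)=0$ and $\nabla_\rho L_\text{reg}(\theta^*,\rho^*)=0$. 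I would also record the standard Lipschitz-stability bounds for the argmin maps, $\|\theta^\star(\rho)-\theta^\star(\rho')\|\le(\beta/\mu)\|\rho-\rho'\|$ and $\|\rho^\star(\theta)-\rho^\star(\theta')\|\le(\beta/\alpha)\|\theta-\theta'\|$, which follow from differentiating the first-order optimality conditions and combining strong convexity with the $\beta$-Lipschitzness of the mixed gradient terms.

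Next come the two per-step contraction lemmas. For the $\theta$-update with $\rho_t$ held fixed, one gradient step with $\eta_\theta\le 1/\beta$ on a $\mu$-strongly convex, $\beta$-smooth function contracts toward its minimizer: $\|\theta_{t+1}-\theta^\star(\rho_t)\|^2\le(1-\mu\eta_\theta)\|\theta_t-\theta^\star(\rho_t)\|^2$. Writing $\theta_t-\theta^*=(\theta_t-\theta^\star(\rho_t))+(\theta^\star(\rho_t)-\theta^*)$, using the stability bound $\|\theta^\star(\rho_t)-\theta^*\|\le(\beta/\mu)\|\rho_t-\rho^*\|$, and splitting the cross term with Young's inequality, I obtain a bound of the form
\[
\|\theta_{t+1}-\theta^*\|^2 \le \bigl(1-\tfrac{\mu\eta_\theta}{2}\bigr)\|\theta_t-\theta^*\|^2 + c_1\,\|\rho_t-\rho^*\|^2,
\]
with $c_1=O\bigl(\beta^2/(\mu^2\eta_\theta)\bigr)$. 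A symmetric computation for the $\rho$-update, now evaluated at $\theta_{t+1}$ and with $\eta_\rho\le 1/\beta$, gives $\|\rho_{t+1}-\rho^*\|^2\le(1-\tfrac{\alpha\eta_\rho}{2})\|\rho_t-\rho^*\|^2+c_2\,\|\theta_{t+1}-\theta^*\|^2$; substituting the previous display for $\|\theta_{t+1}-\theta^*\|^2$ expresses both updates in terms of $\|\theta_t-\theta^*\|^2$ and $\|\rho_t-\rho^*\|^2$ alone.

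Collecting these gives a vector recursion $v_{t+1}\le M v_t$ with $v_t=(\|\theta_t-\theta^*\|^2,\ \|\rho_t-\rho^*\|^2)^T$ and $M$ a nonnegative $2\times 2$ matrix whose diagonal entries are $\approx 1-\tfrac{\mu\eta_\theta}{2}$ and $\approx 1-\tfrac{\alpha\eta_\rho}{2}$ and whose off-diagonal/coupling mass is controlled by $c_1$, $c_2$ and the substituted cross terms. The role of the learning-rate restriction $\eta_\rho\le\min(1/\beta,\ \mu\eta_\theta/(4\beta^2))$ is precisely to force these coupling coefficients to be small enough that $M$ acts as a contraction: concretely I would show $\mathbf{1}^T M\le(1-\kappa)\,\mathbf{1}^T$ componentwise (or with a suitably weighted row vector in place of $\mathbf{1}^T$), with $\kappa=\min(\mu\eta_\theta/2,\ \alpha\eta_\rho)$, which yields $V_{t+1}:=\|v_{t+1}\|_1\le(1-\kappa)V_t$ and then the stated geometric bound by induction on $t$.

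The main obstacle is exactly this coupling. If $\rho$ moved on the same timescale as $\theta$, the conditional minimizer $\theta^\star(\rho_t)$ that the $\theta$-iterate is chasing could shift per step by as much as the iterate gains, and the stale $\theta$ fed into the $\rho$-update would inject comparable error, so the $2\times 2$ gain matrix would fail to have spectral radius below one. The resolution built into the hypotheses is the scale separation $\eta_\rho\ll\eta_\theta$: it makes the drift of $\theta^\star(\cdot)$ negligible relative to the per-step contraction of $\theta$, so the fast variable is effectively equilibrated before the slow variable moves. Verifying that this particular choice of rates actually drives the off-diagonal mass of $M$ below the contraction threshold --- i.e.\ turning the informal two-timescale intuition into the explicit inequality $\mathbf{1}^T M\le(1-\kappa)\mathbf{1}^T$ --- is the one genuinely delicate estimate; everything else is the standard strongly-convex-plus-smooth gradient-step bookkeeping.
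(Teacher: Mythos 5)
Your proposal follows the same overall architecture as the paper's proof --- a per-block contraction bound, a coupling term, a learning-rate condition meant to tame it, and a two-dimensional Lyapunov recursion --- but the specific route you choose breaks down at exactly the step you flag as ``delicate.'' The paper measures the $\theta$-error against the \emph{moving conditional minimizer}, i.e.\ it tracks $\|\theta_t-\theta^*(\rho_t)\|^2+\|\rho_t-\rho^*\|^2$, so the only coupling it must absorb arises in the $\rho$-step and carries an explicit factor $\eta_\rho$ (its inequality (2) has the term $\beta^2\eta_\rho\|\theta_{t+1}-\theta^*(\rho_t)\|^2$); the condition $\eta_\rho\le\mu\eta_\theta/(4\beta^2)$ is then precisely what gives $(1-\mu\eta_\theta)+\beta^2\eta_\rho\le 1-\mu\eta_\theta/2$. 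You instead measure both errors against the fixed point $(\theta^*,\rho^*)$ and convert via the argmin-stability bound $\|\theta^\star(\rho_t)-\theta^*\|\le(\beta/\mu)\|\rho_t-\rho^*\|$ plus Young's inequality. That conversion produces a coupling coefficient $c_1$ of order $\beta^2/(\mu^2\eta_\theta)$ (in fact $\beta^2/(\mu^3\eta_\theta)$ once you pay Young's $1+1/\epsilon$ with $\epsilon\sim\mu\eta_\theta$) feeding $\|\rho_t-\rho^*\|^2$ into the $\theta$-row, and this coefficient is \emph{independent of} $\eta_\rho$. Hence the stated learning-rate condition cannot drive the off-diagonal mass of your matrix $M$ below threshold: the column-sum test $\mathbf{1}^T M\le(1-\kappa)\mathbf{1}^T$ needs $c_1\lesssim\alpha\eta_\rho\le\alpha/\beta$, whereas $c_1\ge\beta^3/\mu^2$ under $\eta_\theta\le1/\beta$; and the weighted-vector (spectral-radius) variant needs $c_1c_2<(1-M_{11})(1-M_{22})\approx(\mu\eta_\theta/2)(\alpha\eta_\rho/2)$, which with $c_2\sim\eta_\rho\beta^2/\alpha$ reduces to roughly $\beta^4\lesssim\mu^3\alpha^2\eta_\theta^2$ --- not implied by, and generically contradicted by, the hypotheses since $\mu,\alpha\le\beta$ and $\eta_\theta\le1/\beta$. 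The two-timescale intuition only shrinks the effect of $\rho$'s \emph{motion} on $\theta$; it does nothing to the effect of the $\rho$-\emph{error} on the $\theta$-target, which is governed by the argmin sensitivity $\beta/\mu$ and is exactly what your $c_1$ encodes.

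To recover the paper's statement along its lines you would have to adopt its Lyapunov function $V_t=\|\theta_t-\theta^*(\rho_t)\|^2+\|\rho_t-\rho^*\|^2$, whose coupling term does scale with $\eta_\rho$ (and note your constants would in any case yield $\kappa=\min(\mu\eta_\theta/2,\alpha\eta_\rho/2)$ rather than the stated $\min(\mu\eta_\theta/2,\alpha\eta_\rho)$). That said, your more explicit formulation has the virtue of exposing an assumption the paper's Step~2 uses silently: that the idealized $\rho$-step evaluated at $\theta^*(\rho_t)$ contracts toward a \emph{fixed} $\rho^*$ at rate $1-\alpha\eta_\rho$. Under the stated hypotheses alone this can fail: for a strongly coupled quadratic such as $L_\text{train}=L_\text{reg}=\tfrac12\theta^2+c\,\theta\rho+\tfrac12\rho^2$ with $c>1$ (which satisfies per-block strong convexity and joint smoothness), one has $\nabla_\rho L(\theta^*(\rho),\rho)=(1-c^2)\rho$, which points away from $\rho^*=0$ no matter how small $\eta_\rho$ is. So closing your argument honestly requires an additional cross-coupling condition (of the type cross-smoothness squared less than $\mu\alpha$, or an equivalent bound making $c_1c_2$ small); without it, neither the unweighted nor the weighted contraction step in your plan can be verified.
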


\begin{proof}
The proof analyzes each update step separately and then combines them to show overall convergence. Let $\theta^*(\rho)$ denote the minimizer of $L(\cdot,\rho)$ for fixed $\rho$.

\textbf{Step 1 ($\theta$-update):}
By $\beta$-smoothness and optimality of $\theta^*(\rho_t)$:
\[L_\text{train}(\theta_{t+1},\rho_t) \leq L_\text{train}(\theta_t,\rho_t) - (\eta_\theta - \eta_\theta^2\beta/2)\|\nabla_\theta L_\text{train}\|^2\]

By $\mu$-strong convexity in $\theta$:
\[\|\nabla_\theta L_\text{train}\|^2 \geq 2\mu(L_\text{train}(\theta_t,\rho_t) - L_\text{train}(\theta^*(\rho_t),\rho_t))\]

Combining with $\eta_\theta \leq 1/\beta$ gives:
\[\|\theta_{t+1} - \theta^*(\rho_t)\|^2 \leq (1-\mu\eta_\theta)\|\theta_t - \theta^*(\rho_t)\|^2 \tag{1}\]

\textbf{Step 2 ($\rho$-update):}
By similar arguments and smoothness of $\nabla_\rho L_\text{reg}$ with respect to $\theta$:
\[\|\rho_{t+1} - \rho^*\|^2 \leq (1-\alpha\eta_\rho)\|\rho_t - \rho^*\|^2 + \beta^2\eta_\rho\|\theta_{t+1} - \theta^*(\rho_t)\|^2 \tag{2}\]

\textbf{Step 3 (Combined Progress):}
Let $V_t = \|\theta_t - \theta^*(\rho_t)\|^2 + \|\rho_t - \rho^*\|^2$. Combining (1) and (2):
\[V_{t+1} \leq [(1-\mu\eta_\theta) + \beta^2\eta_\rho]\|\theta_t - \theta^*(\rho_t)\|^2 + (1-\alpha\eta_\rho)\|\rho_t - \rho^*\|^2\]

The condition $\eta_\rho \leq \mu\eta_\theta/(4\beta^2)$ ensures the $\theta$ progress term dominates the coupling cost:
\[(1-\mu\eta_\theta) + \beta^2\eta_\rho \leq 1-\mu\eta_\theta/2\]

Therefore:
\[V_{t+1} \leq (1-\kappa)V_t\]
where $\kappa = \min(\mu\eta_\theta/2, \alpha\eta_\rho) > 0$

Iterating gives the final bound:
\[V_t \leq (1-\kappa)^tV_0\]
\end{proof}

Note that the condition on learning rates arises from our chosen update order, suggesting this asymmetry is not fundamental to the method but rather an artefact of the ordering.

This result establishes that despite optimizing different objectives for $\theta$ and $\rho$, the alternating scheme converges linearly to the optimum.

\subsection{Local Approximation Analysis}
\label{app:proof_local_structure}

Let $L(\theta, \rho)$ be twice continuously differentiable in a neighborhood of a local minimum $(\theta^*, \rho^*)$ with Hessian $H$ satisfying Assumption 1 (from main text, referring to positive definiteness and bounded coupling). Since $(\theta^*, \rho^*)$ is a local minimum, we have $\nabla L(\theta^*, \rho^*) = 0$. By Taylor's theorem with remainder:

\begin{align}
L(\theta, \rho) &= L(\theta^*, \rho^*) + \frac{1}{2}
\begin{pmatrix}
\theta - \theta^* \\
\rho - \rho^*
\end{pmatrix}^T H
\begin{pmatrix}
\theta - \theta^* \\
\rho - \rho^*
\end{pmatrix} + R(\theta, \rho)
\end{align}
For the function with $L_H$-Lipschitz continuous Hessian, the remainder term is bounded by:
\begin{align}
\|R(\theta, \rho)\| \leq \frac{L_H}{6}\|(\theta - \theta^*, \rho - \rho^*)\|^3
\end{align}
By the assumptions for Theorem~\ref{thm:local_structure} (main text), $H$ is positive definite with $\lambda_{\min}(H) \geq \mu > 0$, giving us:
\begin{align}
\begin{pmatrix}
\theta - \theta^* \\
\rho - \rho^*
\end{pmatrix}^T H
\begin{pmatrix}
\theta - \theta^* \\
\rho - \rho^*
\end{pmatrix} \geq \mu\|(\theta - \theta^*, \rho - \rho^*)\|^2
\end{align}
For the quadratic approximation to maintain $\gamma\mu$-strong convexity (for some $\gamma \in (0, 1)$), as stated in Theorem~\ref{thm:local_structure}, we require:
\begin{align}
\|R(\theta, \rho)\| \leq \frac{(1 - \gamma)\mu}{2}\|(\theta - \theta^*, \rho - \rho^*)\|^2
\end{align}
Combining with our bound on $R$:
\begin{align}
\frac{L_H}{6}\|(\theta - \theta^*, \rho - \rho^*)\|^3 \leq \frac{(1 - \gamma)\mu}{2}\|(\theta - \theta^*, \rho - \rho^*)\|^2
\end{align}
This yields $\|(\theta - \theta^*, \rho - \rho^*)\| \leq \frac{3(1-\gamma)\mu}{L_H}$. The radius $r$ in Theorem~\ref{thm:local_structure} is established by taking the minimum of this and another constraint related to Hessian approximation validity, $r = \min\left(\frac{\mu}{6L_H}, \frac{(1-\gamma)\mu}{2\|H\|}\right)$.
Within this radius, the remainder term satisfies the condition in Theorem~\ref{thm:local_structure}, ensuring that the function is effectively $\gamma\mu$-strongly convex, preserving local optimization properties.

\subsection{Statistical Analysis}

Having established optimization convergence, we analyze how sample size affects the solution quality. The main result is that the statistical error scales with regularization parameter dimension $k$ rather than model dimension $d$.

\begin{theorem}[Statistical Rate]
Assume:
\begin{enumerate}
    \item Loss bounded: $|L(z;\theta,\rho)| \leq B$
    \item Population risk $R$ is $\alpha$-strongly convex in $\rho$
    \item Parameter spaces compact: $\Theta \subset \mathbb{R}^d, \mathcal{P} \subset \mathbb{R}^k$
\end{enumerate}

Then with probability at least $1-\delta$:
\[\|\rho_m - \rho^*_\text{true}\|^2 \leq \frac{4B^2}{\alpha^2}\left(\frac{k\log(1/\delta)}{m}\right)\]
where $\rho^*_\text{true}$ minimizes the population risk and $m$ is the regularization set size.
\end{theorem}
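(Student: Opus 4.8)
The statement concerns the empirical minimizer $\rho_m = \argmin_{\rho \in \mathcal{P}} R_m(\rho)$ over the regularization set of size $m$, where $R_m$ is the empirical regularization (validation) risk with the model parameters held at their trained values and $R$ is the population risk; by the convergence results established above, the cross-regularization iterates reach this $\rho_m$. The plan is a textbook M-estimation argument: (i) use $\alpha$-strong convexity of $R$ in $\rho$ to turn the parameter error into an excess-risk bound, (ii) control the excess risk by a uniform deviation of the empirical process $R_m - R$ indexed by $\rho$, and (iii) bound that deviation by a covering argument over $\mathcal{P} \subset \mathbb{R}^k$ so that only the regularization dimension $k$ enters, not the model dimension $d$. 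The structural point driving the $k$-dependence is that $R_m - R$ is indexed by a $k$-dimensional parameter, so its metric entropy at scale $\gamma$ is $O(k\log(1/\gamma))$.

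First I would record that $\nabla_\rho R(\rho^*_{\text{true}}) = 0$ together with $\alpha$-strong convexity gives $\tfrac{\alpha}{2}\|\rho_m - \rho^*_{\text{true}}\|^2 \le R(\rho_m) - R(\rho^*_{\text{true}})$. Then the basic inequality: decomposing $R(\rho_m) - R(\rho^*_{\text{true}})$ into $[R(\rho_m) - R_m(\rho_m)] + [R_m(\rho_m) - R_m(\rho^*_{\text{true}})] + [R_m(\rho^*_{\text{true}}) - R(\rho^*_{\text{true}})]$ and dropping the middle bracket (nonpositive by optimality of $\rho_m$) yields $R(\rho_m) - R(\rho^*_{\text{true}}) \le 2\sup_{\rho \in \mathcal{P}}|R_m(\rho) - R(\rho)|$. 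For the uniform deviation, take a $\gamma$-net $\mathcal{N}_\gamma$ of the compact set $\mathcal{P}$ with $|\mathcal{N}_\gamma| \le (3\,\mathrm{diam}(\mathcal{P})/\gamma)^k$; since each per-sample loss lies in an interval of width $\le 2B$, Hoeffding plus a union bound over $\mathcal{N}_\gamma$ gives $\max_{\rho \in \mathcal{N}_\gamma}|R_m(\rho) - R(\rho)| \le B\sqrt{2\log(2|\mathcal{N}_\gamma|/\delta)/m}$ with probability $\ge 1-\delta$, where $\log|\mathcal{N}_\gamma| = O(k\log(1/\gamma))$; the oscillation between an arbitrary $\rho$ and its nearest net point is handled by Lipschitz continuity of the loss in $\rho$ on $\mathcal{P}$, and taking $\gamma$ polynomially small in $m$ makes it lower order.

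Combining the three steps already gives $\|\rho_m - \rho^*_{\text{true}}\|^2 = O\!\big((B/\alpha)\sqrt{k\log(1/\delta)/m}\big)$, but the theorem claims the fast rate $O(B^2 k\log(1/\delta)/(\alpha^2 m))$, i.e. error squared decaying like $1/m$ rather than $1/\sqrt{m}$. Bridging this gap is the main obstacle. The remedy is localization: $\rho_m$ lies in a ball around $\rho^*_{\text{true}}$ whose radius is itself controlled by the excess risk, so one restricts the supremum to shrinking shells and exploits that the variance of the excess loss $\ell(\cdot;\theta^*(\rho),\rho) - \ell(\cdot;\theta^*(\rho^*_{\text{true}}),\rho^*_{\text{true}})$ is $O(\|\rho - \rho^*_{\text{true}}\|^2)$ — a Bernstein/self-bounding condition. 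A dyadic peeling argument with Bernstein's inequality then produces $R(\rho_m) - R(\rho^*_{\text{true}}) = O\!\big(B^2 k\log(1/\delta)/(\alpha m)\big)$, and substituting this back into the strong-convexity inequality gives $\|\rho_m - \rho^*_{\text{true}}\|^2 \le \tfrac{4B^2}{\alpha^2}\cdot\tfrac{k\log(1/\delta)}{m}$, as stated.

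The delicate point — and where the listed hypotheses are a little thin — is justifying that local variance bound, which needs smoothness of $\rho \mapsto \ell(z;\theta^*(\rho),\rho)$ (or at least a Lipschitz-in-$\rho$ condition that self-bounds) rather than mere boundedness by $B$; I would add this explicitly and note that it holds for the noise-scale, weight-norm, and augmentation-magnitude parameterizations used throughout the paper, where $\rho$ enters the predictor smoothly. Everything else is routine: Taylor expansion for the strong-convexity step, standard metric-entropy bounds for Euclidean balls, and Hoeffding/Bernstein for the concentration.
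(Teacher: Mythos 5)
Your proposal is correct in its diagnosis and, at the decisive step, deliberately departs from what the paper actually does --- and the departure is warranted. The paper's own proof is exactly your first three steps and nothing more: uniform convergence over the $k$-dimensional space $\mathcal{P}$ giving $\sup_{\rho}|R_m(\theta^*(\rho),\rho)-R(\theta^*(\rho),\rho)| \le B\sqrt{k\log(1/\delta)/m}$, followed by the strong-convexity conversion $\|\rho_m-\rho^*_{\text{true}}\|^2 \le \frac{2}{\alpha}\bigl(R(\rho_m)-R(\rho^*_{\text{true}})\bigr)$, after which it simply writes down the fast-rate bound $\frac{4B^2}{\alpha^2}\cdot\frac{k\log(1/\delta)}{m}$. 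As you correctly observe, that chain only yields $\frac{4B}{\alpha}\sqrt{k\log(1/\delta)/m}$, i.e.\ a squared-error rate of order $\sqrt{k/m}$, not $k/m$; the final inequality in the paper's proof does not follow from the displayed uniform bound, so the ``gap'' you flag is in fact a gap in the paper's argument, not merely in your reconstruction of it. Your remedy --- localization with a Bernstein-type local variance condition and dyadic peeling, which requires Lipschitz/smoothness of the loss in $\rho$ beyond the stated boundedness --- is a legitimate way to reach the claimed $1/m$ rate; an alternative with the same assumption burden is a first-order-condition argument, $\alpha\|\rho_m-\rho^*_{\text{true}}\| \le \|\nabla_\rho R_m(\rho^*_{\text{true}})-\nabla_\rho R(\rho^*_{\text{true}})\|$ plus concentration of the $k$-dimensional gradient, which delivers the squared form $\bigl(\tfrac{2B}{\alpha}\bigr)^2\tfrac{k\log(1/\delta)}{m}$ directly and is probably what the stated constant was meant to reflect. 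One caveat on your sketch: under either route the constant in front is governed by the Lipschitz constant (or gradient bound) of the loss in $\rho$, not by the sup-norm bound $B$ alone, so recovering exactly $\frac{4B^2}{\alpha^2}$ requires identifying that constant with $B$, which your added hypothesis should state explicitly. In short, the paper buys brevity at the cost of proving only the slow rate as written; your version proves the stated rate at the cost of an extra (and necessary) regularity assumption on the dependence of the loss on $\rho$.
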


\begin{proof}
By standard uniform convergence over the $k$-dimensional space $\mathcal{P}$:
\[\sup_{\rho \in \mathcal{P}} |R_m(\theta^*(\rho),\rho) - R(\theta^*(\rho),\rho)| \leq B\sqrt{\frac{k\log(1/\delta)}{m}}\]

Strong convexity then gives:
\[\|\rho_m - \rho^*_\text{true}\|^2 \leq \frac{2}{\alpha}(R(\rho_m) - R(\rho^*_\text{true})) \leq \frac{4B^2}{\alpha^2}\left(\frac{k\log(1/\delta)}{m}\right)\]
\end{proof}

This $O(\sqrt{k/m})$ rate explains why small regularization sets suffice - the error depends on regularization parameter dimension $k$ (typically number of layers) rather than model dimension $d$ (total parameters).

\subsection{Cross-validation Equivalence}

We show our direct optimization achieves the same solution as standard cross-validation. The proof relies on showing that both methods optimize over equivalent solution spaces through different parameterizations.

\begin{theorem}[Cross-validation Equivalence]
Let $f(\rho)$ be a complexity measure. For any $\lambda > 0$, let $\theta_\lambda$ be the solution from cross-validation:
\[\theta_\lambda = \argmin_\theta \{\mathcal{L}_\text{train}(\theta) + \lambda f(\rho)\}\]

Assume $\lambda$ is monotonic in the solution $\rho$. Then cross-regularization achieves the same validation loss as cross-validation:
\[\min_\lambda \mathcal{L}_\text{val}(\theta_\lambda) = \min_\rho \mathcal{L}_\text{val}(\theta(\rho))\]
\end{theorem}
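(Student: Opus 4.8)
The plan is to establish the equality of the two optimal validation losses by showing that the two minimization problems range over the same set of achievable solutions, so that neither can beat the other. First I would make the solution-set correspondence explicit. On the cross-validation side, varying $\lambda > 0$ traces out a one-parameter family of training-optimal solutions $\{\theta_\lambda : \lambda > 0\}$, each of which realizes a complexity level $f(\rho(\theta_\lambda))$; the monotonicity assumption guarantees that the map $\lambda \mapsto \rho(\theta_\lambda)$ is a bijection onto the set of \emph{achievable} complexity levels $\mathcal{R} = \{\rho(\theta_\lambda) : \lambda > 0\}$. On the cross-regularization side, the alternating scheme optimizes $\theta$ on the training loss subject to the complexity being held at $\rho$, and sweeps $\rho$ via the validation gradient; by the equivalence already established in the L2 analysis (and its convex-loss generalization in the appendix), for each fixed $\rho \in \mathcal{R}$ the inner optimization returns exactly $\theta_\lambda$ for the $\lambda$ with $\rho(\theta_\lambda) = \rho$. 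Hence the two methods parameterize the \emph{same} curve of candidate solutions, just indexed differently --- one by the Lagrange multiplier $\lambda$, the other by the constraint level $\rho$.

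Given that correspondence, the argument is a short double inequality. For $\geq$: let $\rho^\star = \rho(\theta_{\lambda^\star})$ where $\lambda^\star$ attains $\min_\lambda \mathcal{L}_\text{val}(\theta_\lambda)$; then $\theta(\rho^\star) = \theta_{\lambda^\star}$ by the correspondence, so $\min_\rho \mathcal{L}_\text{val}(\theta(\rho)) \leq \mathcal{L}_\text{val}(\theta(\rho^\star)) = \mathcal{L}_\text{val}(\theta_{\lambda^\star}) = \min_\lambda \mathcal{L}_\text{val}(\theta_\lambda)$. For $\leq$: symmetrically, let $\rho^{\dagger}$ attain $\min_\rho \mathcal{L}_\text{val}(\theta(\rho))$, pick the $\lambda^{\dagger}$ with $\rho(\theta_{\lambda^{\dagger}}) = \rho^{\dagger}$, and conclude $\min_\lambda \mathcal{L}_\text{val}(\theta_\lambda) \leq \mathcal{L}_\text{val}(\theta_{\lambda^{\dagger}}) = \mathcal{L}_\text{val}(\theta(\rho^{\dagger})) = \min_\rho \mathcal{L}_\text{val}(\theta(\rho))$. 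Combining the two gives the claimed equality. I would also note that the $\min$'s are to be read as $\inf$'s if the optima are not attained, and that compactness of the parameter space or coercivity of the validation loss secures attainment --- otherwise the statement holds with infima throughout.

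The main obstacle is making the ``same solution space'' step fully rigorous rather than intuitive, and it has two facets. The first is the scope of the monotonicity hypothesis: Lagrangian duality only gives that the constrained problem $\min_\theta \mathcal{L}_\text{train}(\theta)$ s.t. $f(\rho(\theta)) \leq c$ is equivalent to the penalized problem for \emph{some} multiplier, and the correspondence between $c$ and $\lambda$ is clean only when the training loss is convex and the complexity constraint is convex (so the active constraint binds and KKT multipliers are unique). The stated hypotheses ("$\lambda$ monotonic in the solution $\rho$") are doing exactly this work, so I would spell out that this is precisely the condition under which $\{\theta_\lambda\}$ and $\{\theta(\rho) : \|\cdot\| = \rho\text{-type constraint}\}$ coincide, citing the L2 equivalence theorem as the template and the appendix's convex-loss proof for the general differentiable regularizer. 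The second facet is endpoint behavior: as $\lambda \to 0^+$ or $\lambda \to \infty$ the achievable set $\mathcal{R}$ may be an open interval, so I would either assume the validation-optimal complexity lies in the interior (generic and consistent with the paper's empirical picture) or pass to the closure and argue by continuity of $\mathcal{L}_\text{val}$ along the solution path. Once those two points are pinned down, the remaining steps are the routine double-inequality above.
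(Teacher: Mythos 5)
Your proposal is correct and follows essentially the same route as the paper's proof: the monotonicity assumption turns $\lambda \mapsto \rho$ into a reparameterization, so both methods minimize the validation loss over the same family of solutions, indexed either by $\lambda$ or directly by $\rho$. Your write-up is in fact more careful than the paper's own two-line argument --- the explicit bijection onto the achievable set, the double inequality, and the remarks on penalized-versus-constrained equivalence and on attainment/endpoint behavior make precise steps the paper simply asserts.
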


\begin{proof}
By monotonicity of $\lambda$ in $\rho$, we can rewrite cross-validation optimization:
\[\min_\lambda \mathcal{L}_\text{val}(\theta_\lambda) = \min_\rho \mathcal{L}_\text{val}(\theta_\lambda)\]
where $\theta_\lambda = \argmin_\theta \{\mathcal{L}_\text{train}(\theta) + \lambda f(\rho)\}$

This is equivalent to direct optimization of $\rho$ in cross-regularization:
\[\min_\rho \mathcal{L}_\text{val}(\theta(\rho))\]

Therefore both methods optimize the same objective over the same solution space, just parameterized differently through $\lambda$ or direct optimization of $\rho$.
\end{proof}

\subsection{Proof of Neural Network Convergence Stability}
\label{app:proof_nn_convergence_stability}
The theorem regarding the convergence of cross-regularization for neural networks (Theorem~\ref{thm:nn_convergence_stability}) is established under the following assumptions:
\begin{enumerate}
    \item The training dynamics of $\theta$ converge to a stationary point $\theta^*(\rho)$ for any fixed $\rho$.
    \item The validation loss $\mathcal{L}_{\text{val}}(\theta,\rho)$ is $\alpha$-strongly convex in $\rho$ for any fixed $\theta$.
    \item The gradients $\nabla_\rho \mathcal{L}_{\text{val}}(\theta,\rho)$ are $\beta$-Lipschitz continuous with respect to $\theta$.
\end{enumerate}
The proof, summarized below, demonstrates that for sufficiently small learning rates $\eta_\rho$, the error terms for both model and regularization parameters converge to zero:
\begin{proof}
Define the error metrics:
\begin{align}
E_\theta(t) &= \|\theta_t - \theta^*(\rho_t)\|^2 \\
E_\rho(t) &= \|\rho_t - \rho^*\|^2
\end{align}

By assumption 1, $\theta$ converges to $\theta^*(\rho)$ for any fixed $\rho$ with some convergence rate. When $\rho$ is updated, this introduces a perturbation in the optimization landscape. We can model this as:
\begin{align}
E_\theta(t+1) &\leq \gamma \cdot E_\theta(t) + \delta \cdot \|\rho_{t+1} - \rho_t\|^2 
\end{align}
where $\gamma < 1$ captures the convergence rate of $\theta$ and $\delta$ is a coupling constant that measures how changes in $\rho$ affect the optimization of $\theta$.

For the $\rho$ updates, we use strong convexity (assumption 2) and Lipschitz continuity (assumption 3):
\begin{align}
\|\rho_{t+1} - \rho^*\|^2 &\leq \|\rho_t - \eta_\rho \nabla_\rho \mathcal{L}_{\text{val}}(\theta_{t+1},\rho_t) - \rho^*\|^2 \\
&= \|\rho_t - \eta_\rho \nabla_\rho \mathcal{L}_{\text{val}}(\theta^*(\rho_t),\rho_t) - \rho^*\|^2 + \|\eta_\rho(\nabla_\rho \mathcal{L}_{\text{val}}(\theta_{t+1},\rho_t) - \nabla_\rho \mathcal{L}_{\text{val}}(\theta^*(\rho_t),\rho_t))\|^2 \\
&\leq (1-\eta_\rho\alpha)E_\rho(t) + \eta_\rho^2\beta^2 E_\theta(t+1)
\end{align}

Where we used the fact that for $\alpha$-strongly convex functions with step size $\eta_\rho \leq \frac{2}{\alpha}$, the update contracts the distance to the optimum by a factor of $(1-\eta_\rho\alpha)$.

Combining these inequalities, we get a linear system:
\begin{align}
\begin{bmatrix} E_\theta(t+1) \\ E_\rho(t+1) \end{bmatrix} \leq 
\begin{bmatrix} 
\gamma & \delta\eta_\rho^2 \\
\eta_\rho^2\beta^2 & 1-\eta_\rho\alpha
\end{bmatrix}
\begin{bmatrix} E_\theta(t) \\ E_\rho(t) \end{bmatrix}
\end{align}

Let's denote this matrix as $M$. For the system to converge, we need all eigenvalues of $M$ to have magnitude less than 1. The eigenvalues are the solutions to:
\begin{align}
\text{det}(M - \lambda I) = 0
\end{align}

For sufficiently small $\eta_\rho$, we can ensure that both eigenvalues have magnitude less than 1, as the diagonal terms $\gamma$ and $(1-\eta_\rho\alpha)$ are dominant. Specifically, when:
\begin{align}
\eta_\rho < \min\left(\frac{2}{\alpha}, \sqrt{\frac{(1-\gamma)(1-\eta_\rho\alpha)}{\delta\beta^2}}\right)
\end{align}

Then both error terms converge to zero as $t \to \infty$, establishing convergence to a stationary point $(\theta^*, \rho^*)$.
\end{proof}

\section{Norm based examples}
\label{app:experiments}

\subsection{L2 Regularization Details}
\label{app:l2}

We validate L2 cross-regularization on synthetic data specifically designed to demonstrate the importance of regularization. From 5 independent base features, we create 100 total features by adding Gaussian noise ($\sigma=0.1$) to copies of the base features. True coefficients alternate between +1 and -1 for features derived from the same base feature. This design creates groups of highly correlated features, making the linear system ill-conditioned. Without regularization, the model can exploit these correlations to fit noise by assigning large positive and negative weights to redundant features.

The cross-regularization model uses stochastic gradient descent with learning rates 0.01 for both feature learning and L2 regularization. Training runs for 6000 steps with regularization starting at step 3000 to demonstrate adaptation. For baseline comparison, we fit ridge regression models across 1000 logarithmically-spaced values of $\lambda$ from $10^{-3}$ to $10^1$.

\subsection{L1 Regularization Details}
\label{app:l1}

We evaluate L1 cross-regularization on the diabetes regression dataset \cite{efron2004least}. The dataset consists of 442 patients with 10 physiological features. Data is standardized and split 80/20 into train/validation sets.

The cross-regularization model uses stochastic gradient descent with momentum 0.99, learning rate 0.0005 for feature learning and 0.01 for L1 regularization. Training runs for 2000 epochs with batch size 512. For baseline comparison, we fit LASSO models across 50 logarithmically-spaced values of $\lambda$ from $10^{-2.5}$ to 1. Extended results are shown in Fig. \ref{fig:l1_results}.

\begin{figure}[h]
\centering
\includegraphics[width=0.7\textwidth]{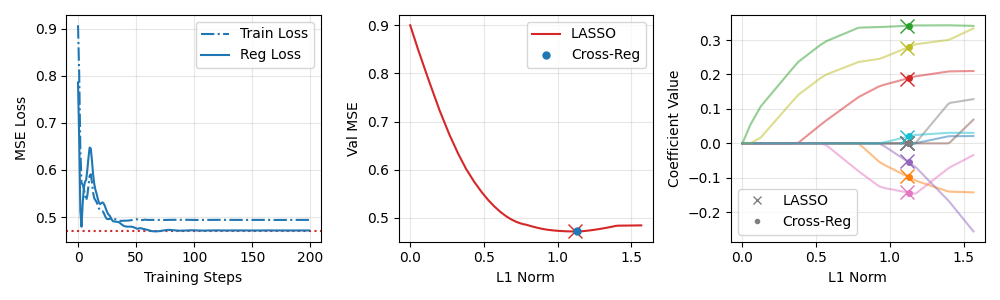}
\caption{Extended L1 cross-regularization results on diabetes dataset. (A) Training dynamics demonstrating stable convergence despite non-smooth L1 penalty. (B) Validation MSE versus L1 norm comparing cross-regularization and LASSO. (C) Coefficient paths showing similar sparsity patterns were discovered through direct optimization rather than cross-validation.}
\label{fig:l1_results}
\end{figure}

\subsection{Spline Regularization Details}
\label{app:splines}

We generate synthetic data from the function:
\begin{equation*}
y(x) = \sin(2\pi x) + 0.5\sin(8\pi x) + \epsilon(x)
\end{equation*}
with heteroskedastic noise $\epsilon(x) \sim \mathcal{N}(0, (0.5+x)^2\sigma^2)$. This provides a challenging test case with both smooth and sharp features. We sample 20 points with gaps to evaluate interpolation.

The B-spline model uses cubic basis functions with 15 knots. Second derivatives are computed analytically and normalized by trace. Training uses SGD with learning rate 0.3 for both parameters and smoothness. Extended results shown in Fig. \ref{fig:splines_full}.

\begin{figure}[h]
\centering
\includegraphics[width=0.7\textwidth]{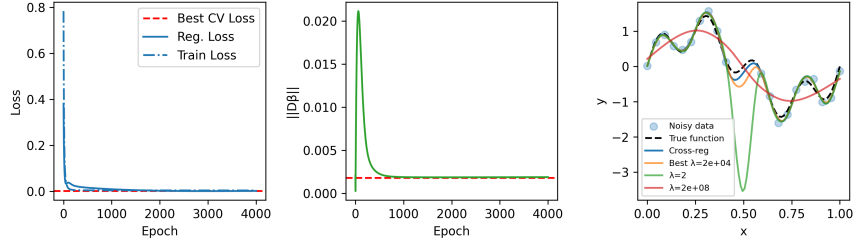}
\caption{Extended Spline Results: (A) Validation loss evolution demonstrates convergence to cross-validation performance. (B) Smoothness norm adaptation reveals automatic discovery of appropriate complexity. (C) Comparison of fitted functions across different smoothness levels.}
\label{fig:splines_full}
\end{figure}

\newpage

\section{Neural Network Implementation}

Cross-regularization requires only two modifications to standard neural network training: adding noise parameters after normalization layers and implementing validation updates. The method can be implemented in a few lines of code, with no custom optimizers or complex architectural changes needed.

\subsection{Model Architecture}
Each layer block in our neural networks applies normalization (LayerNorm or BatchNorm) followed by learnable noise:
\begin{itemize}
  \setlength\itemsep{0em}
\item Linear / Convolutional layer
\item Normalization (without affine)    \item Additive noise with learned scale $\sigma_l = \exp(\rho_l)$
    \item ReLU activation
\end{itemize}

\subsection{Regularization Class}
The algorithm requires only the separation of parameters and datasets. We implement this through an abstract \texttt{RegularizedModel} class, which must specify the set of regularization parameters $\rho$. The remaining parameters are considered training parameters. This abstraction facilitates models to implement different forms of regularization while maintaining the same training procedure.

\subsection{Training Protocol}
Optimization settings:
\begin{itemize}
  \setlength\itemsep{0em}
    \item Adam optimizer
    \item Learning rates: $10^{-4}$ (model), $10^{-1}$ (noise)
    \item Initialization: log $\sigma$ $= -3$
    \item Batch size: 512
    \item Training epochs: 100
\end{itemize}

\subsection{Computational Analysis}

Computational requirements for T training steps:
\begin{itemize}
  \setlength\itemsep{0em}
\item Standard train-validation split: $O(T(1+\frac{v}{1-v}))$ forward passes, for a v validation split \%, (1-v) test split \%, and one validation run per training epoch. For v=10\%: $O(1.11T)$.
\item Cross-regularization: $O(T(1 + \frac{K}{r}))$ forward passes for $K$ MCMC samples and regularization update interval $r$. For K=3 and r=30: $O(1.1T)$.
\item M-fold cross-validation: $O(MT)$ forward passes.
\item PBT: $O(PT)$ forward passes. Memory usage: O(PM) for model size M and population size P. 
\end{itemize}

\section{Population-Based Training}
\label{app:pbt}
\begin{figure}[h]
\centering
\includegraphics[width=0.3\textwidth]{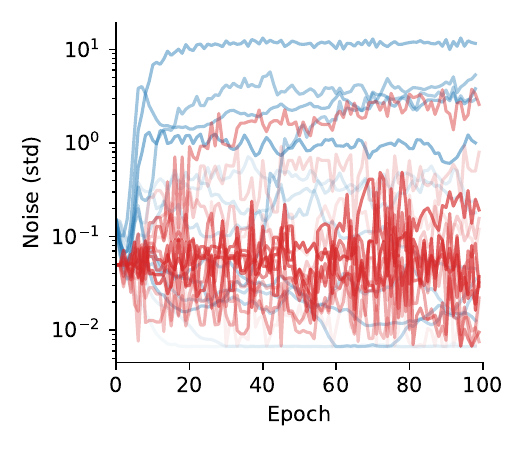}
\caption{Noise dynamics for PBT. Noise evolution in PBT exhibits more volatile adaptation compared to the smooth emergence in cross-regularization, reflecting the discrete nature of evolutionary updates.}
\label{fig:pbt_appendix}
\end{figure}

PBT was implemented with a population size of 20 models, using validation accuracy for selection and random perturbation factors in [0.8, 1.2] for noise parameter updates. The population size was chosen to balance computational cost with optimization stability. 

The evolutionary optimization in PBT leads to distinct training dynamics compared to cross-regularization's gradient-based approach. Figure \ref{fig:pbt_appendix} shows the more volatile noise adaptation, with sudden changes in noise levels corresponding to selection and mutation events. 

\section{ResNet Experiments}
\label{app:resnet}

WideResNet-16-4 implements residual blocks with dual 3x3 convolutions and batch normalization across three stages. The architecture's skip connections create an ensemble of paths with varying depths \cite{veit2016residual}, enabling multiple information routes. Our experiments reveal two findings: first, despite injecting noise levels comparable to VGG-16, the network maintains performance through its skip connections; second, the training dynamics (Figure~\ref{fig:resnet_appendix}) show distinct noise adaptation phases - an initial increase at epoch 5 when validation plateaus, followed by a second rise at epoch 40 as training accuracy approaches 100\%. This pattern mirrors our CIFAR-10 results, reinforcing the connection between generalization gaps and noise adaptation.
\begin{figure}[h]
\centering
\includegraphics[width=0.22\textwidth]{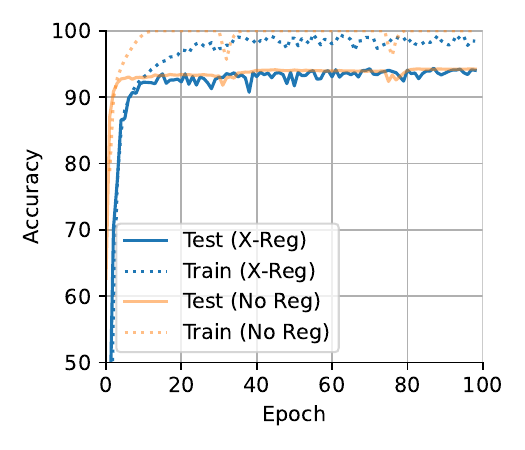}
\includegraphics[width=0.22\textwidth]{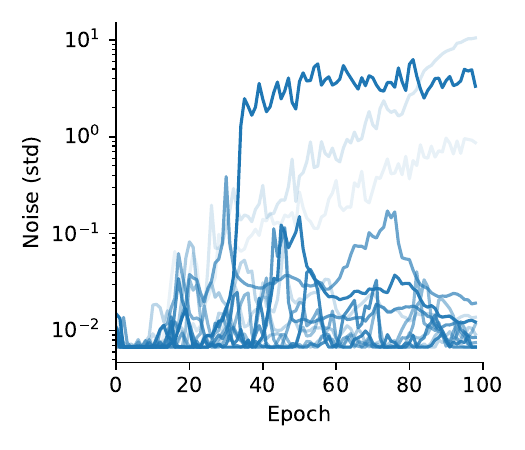}
\includegraphics[width=0.22\textwidth]{figures/resnet_final_noise.pdf}
\caption{ResNet training dynamics: (a) Generalization gap emerges at epoch 5 and training nearly overfits at epoch 40, triggering corresponding noise adaptations. (b) Layer-wise noise evolution follows VGG-16 pattern, increasing at validation plateaus and overfiting. (c) Final noise concentrates in layers 1, 2 and 14, exceeding 10 standard deviations without compromising performance.}
\label{fig:resnet_appendix}
\end{figure}

\subsection{BatchNorm and Multiplicative Noise}

The standard ResNet implementation with BatchNorm and multiplicative noise exhibits similar qualitative dynamics, though with less interpretable scales (reaching over 3000 standard deviations)
\begin{figure}[h]
\centering
\includegraphics[width=0.22\textwidth]{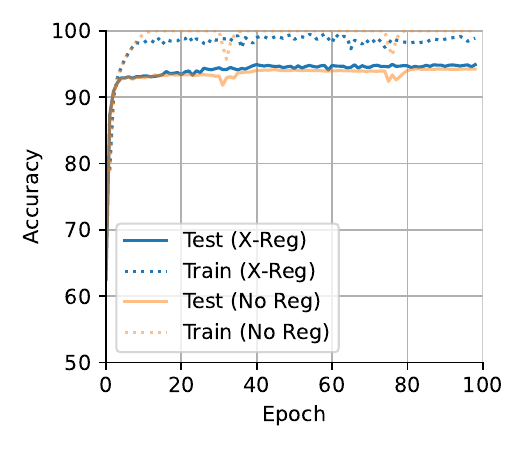}
\includegraphics[width=0.22\textwidth]{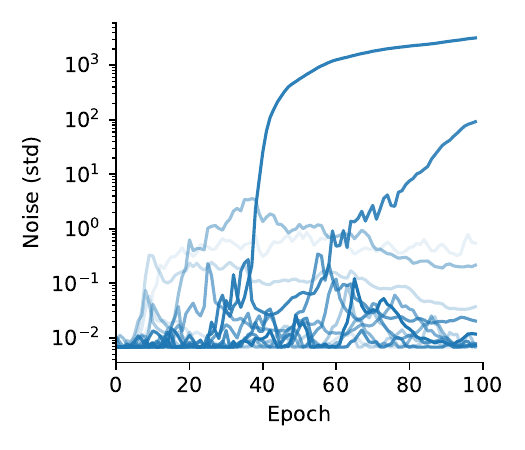}
\includegraphics[width=0.22\textwidth]{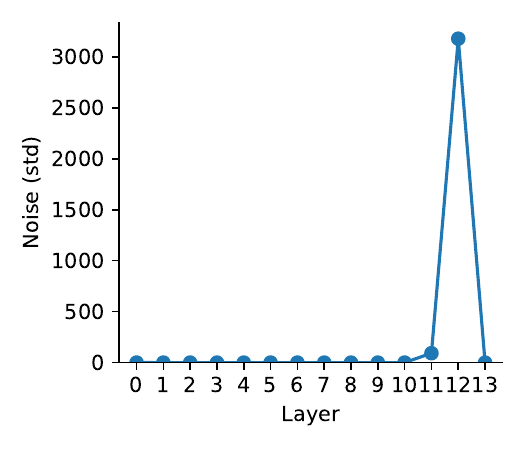}
\caption{Training dynamics for ResNet with BatchNorm and multiplicative noise.}
\label{fig:resnet_mult}
\end{figure}

\newpage

\section{Parameter Sensitivity Analysis}

Figures \ref{fig:sweep_interval}, \ref{fig:sweep_mcmc_reg} and \ref{fig:sweep_reg_split} show the detailed plots for the systematic method analyses.

\begin{figure}[th!]
\centering
\includegraphics[width=0.22\textwidth]{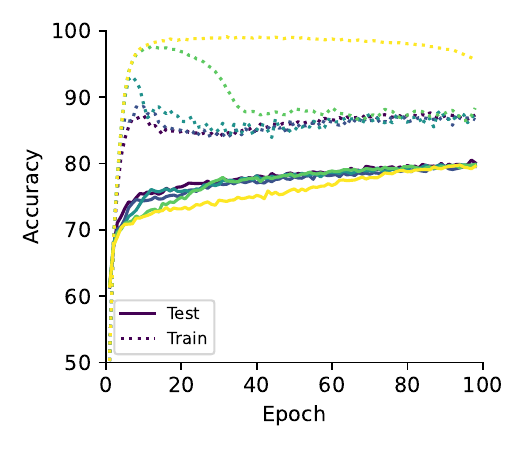}
\includegraphics[width=0.22\textwidth]{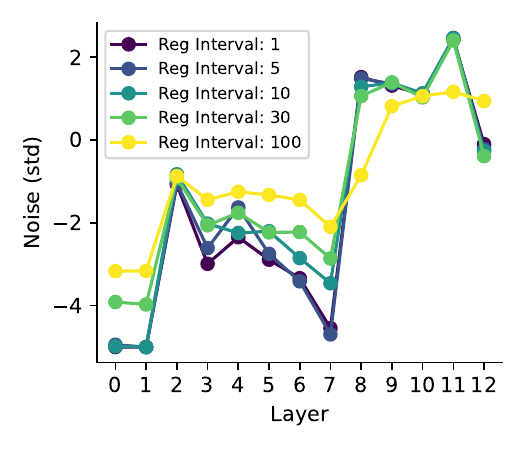}
\includegraphics[width=0.22\textwidth]{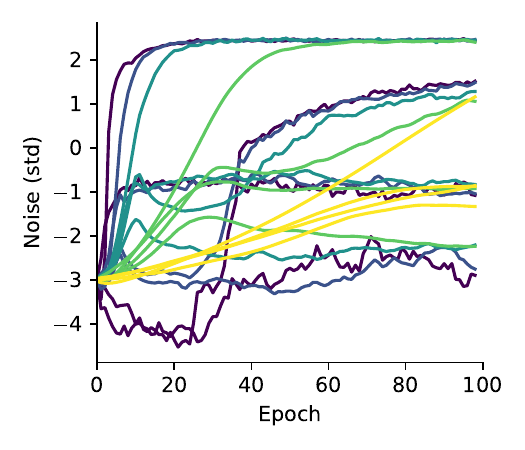}
\caption{Effect of update interval. Consistent noise patterns emerge despite different update frequencies. Smoother evolution with more frequent updates but comparable convergence.}
\label{fig:sweep_interval}
\end{figure}

\begin{figure}[th!]
\centering
\includegraphics[width=0.22\textwidth]{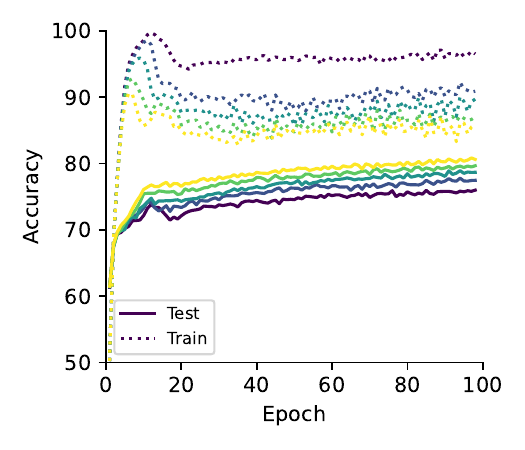}
\includegraphics[width=0.22\textwidth]{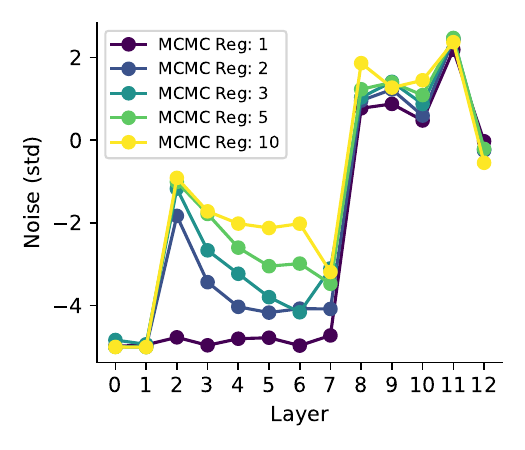}
\includegraphics[width=0.22\textwidth]{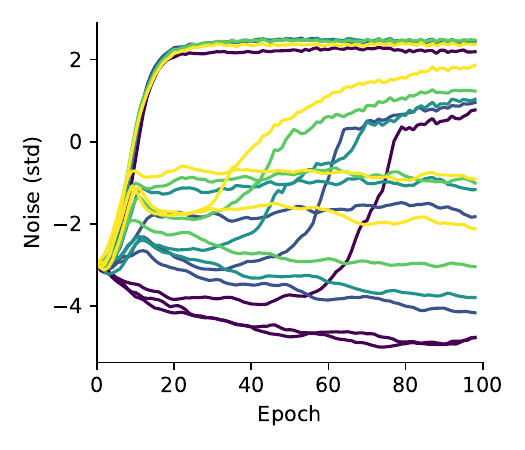}
\caption{Impact of MCMC samples for regularization. Accuracy with different sample counts shows diminishing returns beyond 3-5 samples.}
\label{fig:sweep_mcmc_reg}
\end{figure}

\begin{figure}[th!]
\centering
\includegraphics[width=0.22\textwidth]{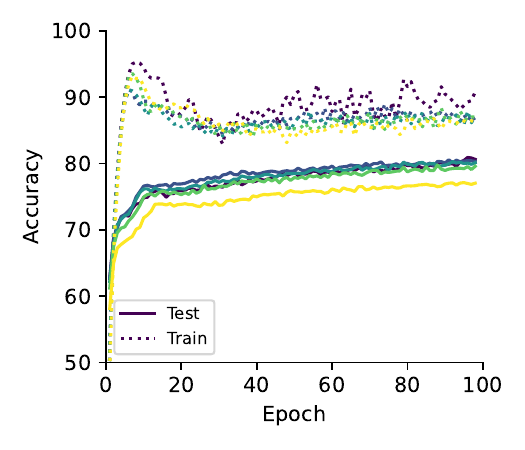}
\includegraphics[width=0.22\textwidth]{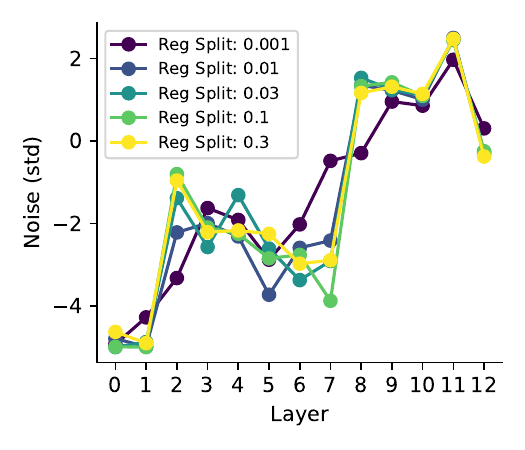}
\includegraphics[width=0.22\textwidth]{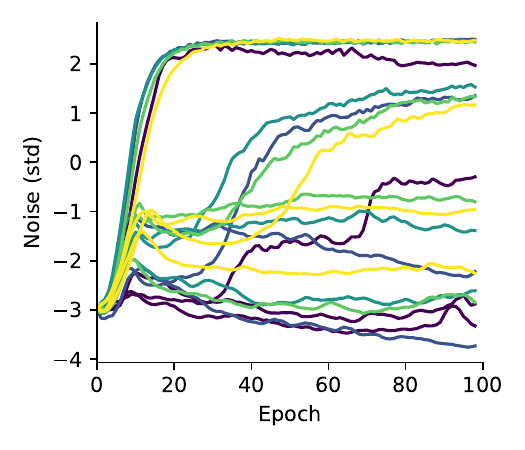}
\caption{Sensitivity to regularization set size. Performance remains stable down 1\% of training data. This efficiency stems from the low-dimensional nature of the regularization parameters compared to model weights.
}
\label{fig:sweep_reg_split}
\end{figure}

\section{Extended Stability Simulations}
\label{app:extended_stability_simulations}

To further demonstrate the stability of the learned regularization parameters in neural networks, Figure~\ref{fig:large_noise_evolution} shows the evolution of layer-wise noise sigmas ($\sigma_l$) over an extended training period of 600 epochs. The noise parameters stabilize after an initial adaptation phase, confirming the empirical robustness of the adaptive mechanism discussed in Subsection~\ref{ssec:adaptive_noise_dynamics}.

\begin{figure}[h]
\centering
\includegraphics[width=0.5\textwidth]{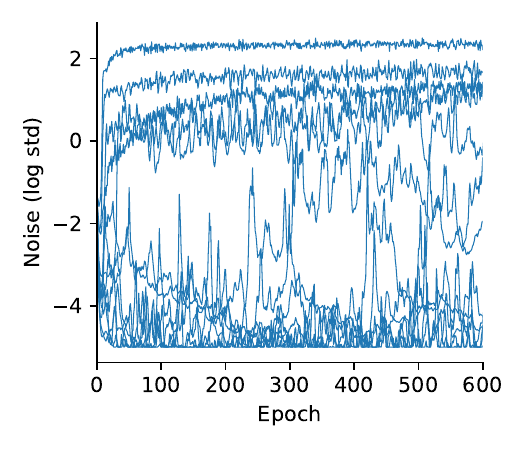}
\caption{Evolution of layer-wise noise sigmas ($\sigma_l$) over 600 training epochs for a VGG-style network on CIFAR-10. The regularization parameters demonstrate long-term stability after an initial adaptation period.}
\label{fig:large_noise_evolution}
\end{figure}

\clearpage

\end{document}